\theoremstyle{plain}
\newtheorem{theorem}{Theorem}
  \theoremstyle{definition}
  \theoremstyle{plain}
  \newtheorem{proposition}{Proposition}
  \theoremstyle{plain}
  \newtheorem{lemma}{Lemma}
  \theoremstyle{plain}
\newcommand{\E}{\mathcal{E}}
\newcommand{\G}{\mathcal{G}}
\newcommand{\V}{\mathcal{V}}
\newcommand{\p}{\mathbf{P}}
\newcommand{\x}{\mathbf{x}}
\newcommand{\betahat}{\hat{\beta}}
\newcommand{\kappahat}{\hat{\kappa}}
\newcommand{\Bhat}{\hat{B}}
\newcommand{\Sigmahat}{\hat{\Sigma}}
\newcommand{\Thetahat}{\hat{\Theta}}
\newcommand{\psihat}{\hat{\psi}}
\newcommand{\Var}{\mathbf{Var}}
\newcommand{\Ex}{\mathbf{E}}
\newcommand{\dice}{\textsc{Dice}}
\newcommand{\slice}{\textsc{Slice}}
\DeclareMathOperator*{\argmin}{argmin}
\newcommand{\prob}{\mathbbm{P}}
\title{Information Theoretic Optimal Learning of Gaussian Graphical Models}
\author{Sidhant Misra\thanks{sidhant@lanl.gov} , Marc Vuffray\thanks{vuffray@lanl.gov} , Andrey Y. Lokhov\thanks{lokhov@lanl.gov}}
\affil{Theoretical Division, Los Alamos National Laboratory, Los Alamos, NM 87545, USA}
\begin{document}
\global\long\def\argmin{\operatornamewithlimits{argmin}}
\global\long\def\argmax{\operatornamewithlimits{argmax}}

\maketitle

\begin{abstract}
What is the \emph{optimal} number of independent observations from which a sparse Gaussian Graphical Model can be correctly recovered? Information-theoretic arguments provide a lower bound on the minimum number of samples necessary to perfectly identify the support of any multivariate normal distribution as a function of model parameters. For a model defined on a sparse graph with $p$ nodes, a maximum degree $d$ and minimum normalized edge strength $\kappa$, this necessary number of samples scales at least as $d \log p/\kappa^2$. The sample complexity requirements of existing methods for perfect graph reconstruction exhibit dependency on additional parameters that do not enter in the lower bound. The question of whether the  lower bound is tight and achievable by a polynomial time algorithm remains open. In this paper, we constructively answer this question and propose an algorithm, termed \dice, whose sample complexity matches the information-theoretic lower bound up to a universal constant factor. We also propose a related algorithm \slice\ that has a slightly higher sample complexity, but can be implemented as a mixed integer quadratic program which makes it attractive in practice. Importantly, \slice\ retains a critical advantage of \dice\ in that its sample complexity only depends on quantities present in the information theoretic lower bound. We anticipate that this result will stimulate future search of computationally efficient sample-optimal algorithms.   
\end{abstract}

\section{Introduction}
Gaussian Graphical Models (GGMs) are powerful modelling tools for representing statistical dependencies between variables in the form of undirected graphs that are widely used throughout a large number of fields, including neuroscience \cite{huang2010learning,NIPS2010_4080}, gene regulatory networks \cite{basso2005reverse, menendez2010gene} and protein interactions \cite{Friedman799, jones2012psicov}. The popularity of GGMs in applications can be explained by the fact that multivariate Normal distribution approximately describes physical variables represented by sums of independent factors and has maximum entropy among all continuous-variable distributions with a given mean and covariance. Moreover, the sparsity pattern of the graph underlying the GGM provides interpretable structural information on the conditional dependencies between variables through the so-called separation property of Markov Random Fields (MRFs).

In this paper, we study the inverse problem of learning a sparse GGM from a small number of observations. Consider a multivariate Normal distribution defined on a graph $\G=(\V,\E)$ with $|\V|=p$ and bounded maximum degree $d$:
\begin{align}
    \prob(\x) = \frac{\sqrt{\text{det}(\Theta)}}{(2\pi)^{\frac{p}{2}}} \exp \left(-\frac{1}{2}\sum_{i \in \V} \Theta_{ii} (x_i - \mu_i)^2 - \sum_{(i,j) \in \E} \Theta_{ij}(x_i-\mu_i)(x_j-\mu_j) \right),\label{eq:GGM_pdf}
    \vspace{-0.1in}
\end{align}
where $\mu_i$ denotes the mean of the variable $x_i$ and $\Theta$ is the \emph{precision matrix} whose support is determined by the sparsity pattern of the graph $\G$. GGMs have a special property that $\Theta$ is equal to the inverse of the covariance matrix $\Sigma$, meaning that $(\Sigma^{-1})_{ij} = 0$ for all $(i,j) \notin \E$. In our reconstruction problem, the data is given as a collection of $n$ independent samples $\{x^k_i\}_{i\in \V}$ indexed by $k=1,\ldots,n$ and drawn from the distribution \eqref{eq:GGM_pdf}. We are interested in finding tractable algorithms that with high probability output an accurate estimate $\hat{\G}$ of the graph $\G$, i.e. $\prob(\hat{\G} = \G)  > 1-\delta$ for a given confidence $\delta > 0$. 

The minimum number of samples $n^{*}$ required for perfect sparse graph reconstruction is given by an information-theoretic (IT) lower bound in \cite{wang2010information} that reads
\begin{align} \label{eq:IT_lower}
    n^* > \max \left\{ \frac{\log {\binom{p-d}{2} }-1}{4 \kappa^2}, \frac{2\left(\log {\binom{p}{d}} -1\right)}{\log \left(1+\frac{d \kappa}{1-\kappa}\right) -\frac{d \kappa}{1+(d-1)\kappa}} \right\},
\end{align}
where the parameter $\kappa$ denotes the minimum normalized edge strength and is defined as
\begin{align}
    \kappa = \min_{(i,j) \in \E} \frac{|\Theta_{ij}|}{\sqrt{\Theta_{ii}\Theta_{jj}}}. \label{eq:kappa_def}
\end{align}

Notice that the IT lower bound \eqref{eq:IT_lower} depends solely on three parameters of the problem: dimension $p$, maximum degree $d$, and minimum edge strength $\kappa$. A weak logarithmic dependence on $p$ indicates that it might be possible to reconstruct $\G$ in the high-dimensional regime, and the inverse square dependence on $\kappa$ is natural because it becomes more difficult to distinguish an edge of low strength from its absence as $\kappa \rightarrow 0$. It remained unknown if the bound \eqref{eq:IT_lower} is tight, i.e., if there exists a \emph{sample-optimal} algorithm, say named $\textsc{Algo}$, with a sample complexity $n_{\textsc{Algo}}$ that does not depend on additional parameters and achieves this bound:
\begin{equation}
    n_{\textsc{Algo}}(p,d,\kappa) = C n^{*}(p,d,\kappa),
    \label{eq:converse_IT}
\end{equation}
where $C$ is a universal constant that does not depend on any problem parameters. If \eqref{eq:converse_IT} is satisfied, we say that $\textsc{Algo}$ provides a \emph{converse} result to the IT lower bound \eqref{eq:IT_lower}.

Numerous algorithms have been suggested to reconstruct sparse GGMs; a non-exhaustive list includes \cite{meinshausen2006high,yuan2007model,cai2011constrained,anandkumar2012high,cai2012estimating,johnson2012high,wang2016precision}. However, the sample complexity analysis of previously proposed methods reveals that none of them is converse to the IT bound \eqref{eq:IT_lower}. For most algorithms, the required number of samples depends on additional parameters of the problem that are not present in \eqref{eq:IT_lower}, often due to the assumptions made in the analysis. The regression-type approach of \cite{meinshausen2006high} for estimating the neighborhood of each vertex based on \textsc{Lasso} \cite{tibshirani1996regression} requires certain \emph{incoherence} properties of the precision matrix, reminiscent of the compressed sensing problem. A variant of the incoherence condition is assumed in the analysis \cite{ravikumar2008model} of the $\ell_1$ regularized log-likelihood estimator, commonly known as \textsc{Graph Lasso} \cite{yuan2007model,d2008first}. The proof for $\ell_1$-based estimators \textsc{Clime} \cite{cai2011constrained} and \textsc{Aclime} \cite{cai2012estimating} require that the eigenvalues of the precision matrix are bounded. The conditional covariance thresholding algorithm \cite{anandkumar2012high} was analyzed only for the class of so-called \emph{walk-summable} models. The analysis of non-convex optimization based methods \cite{johnson2012high,wang2016precision} require bounded eigenvalues of $\Theta$ matrix.

Although the above methods have been shown to successfully exploit sparsity and reconstruct the underlying graph $\G$ perfectly with $O(\log p)$ samples, all of them exhibit dependence on the condition number of the precision matrix among other quantities. In particular, the bound on the condition number of $\Theta$ follows from the the most prevalent assumption in the literature, the so-called Restricted Eigenvalues (RE). Consequently, it is widely believed that the RE condition is in fact \emph{necessary} to enable model reconstruction. Notice, however, that the condition number has no impact on the IT lower bound in \eqref{eq:IT_lower}, as stated above. Consider the following example of a simple precision matrix:
\begin{align}
    \Theta = \left[ \begin{array}{cccc}
        1       & \kappa_0    &    \kappa_0    \\
        \kappa_0  & 1         & 1-\epsilon  \\
        \kappa_0  & 1-\epsilon& 1      \\
        \end{array}
        \right],         \label{eq:counter_example}
\end{align}
where $1 - \epsilon > \kappa_0 > 0$. The minimum normalized edge strength for \eqref{eq:counter_example} is given by $\kappa = \kappa_0$. On the other hand, the condition number is given by $\frac{\lambda_{max}(\Theta)}{\lambda_{min}(\Theta)} \geq \epsilon^{-1}$. This means that if we keep $\kappa_0$ fixed and let $\epsilon \rightarrow 0$, the minimum number of samples $n^*$ according to the IT bound in \eqref{eq:IT_lower} remains fixed whereas the condition number, and hence the sample complexity of existing algorithms diverges.

Several approaches that might appear quite natural in this context surprisingly do not successfully eliminate additional parametric dependencies in their sample complexity.
For example, a natural path is to consider conditional independence testing, since it directly exploits the so-called separation property of graphical models. Along these lines one might attempt methods similar to the SGS and PC algorithms found in Spirtes et al., 2001 \cite{spirtes2000causation} and Kalisch et al., 2007 \cite{kalisch2007estimating}. However, in \cite{van2013ell1}, which is the follow up of \cite{kalisch2007estimating}, the authors explicitly pointed out
that conditional independence testing requires strong faithfulness assumptions in the analysis of the PC algorithm in \cite{kalisch2007estimating}. Even in \cite{van2013ell1}, Maximum-Likelihood with exhaustive search requires bounds on eigenvalues of the covariance matrix; therefore, the sample complexity depends on these eigenvalue bounds.
A different approach to utilizing conditional independence testing is based on convergence of the empirical correlation coefficients, such as the one studied in \cite{anandkumar2012high}. However methods based on this idea do not seem to close the IT lower bound. The example below gives an intuitive explanation why. Consider the precision matrix
\begin{align}
     \Theta = \left[ \begin{array}{cccc}
        1       & \kappa    &    \kappa   & 0 \\
        \kappa  & 1         & 1-\epsilon   & 0\\
        \kappa  & 1-\epsilon& 1      & 0\\
        0 & 0& 0& 1\\
        \end{array}
        \right].      
\end{align}

The correlation coefficient of variable corresponding to rows 1 and 2 conditioned on variable 4 can be computed as $
    \rho_{12\mid 4} = \frac{Cov(X_1,X_2\mid X_4)}{\sqrt{Var(X_1\mid X_4) Var(X_2 \mid X_4)}} 
                    = \frac{\kappa \sqrt{\epsilon}}{\sqrt{(1-\kappa^2)(2-\epsilon)}}.$ This implies that to assert that there exists an edge between 1 and 2 one needs $n=O(1/\epsilon)$ samples, whereas for $\epsilon<\kappa$ there is no dependence of the IT lower bound on $\epsilon$. 
                    
Does that mean that the IT lower bound \eqref{eq:IT_lower} is loose, and the bounded condition number of the precision matrix is indeed a necessary condition for the recovery of sparse GGMs?  In this paper, we answer this question constructively, and propose a multi-stage algorithm, named \emph{Degree-constrained Inverse Covariance Estimator} (\dice). Without any assumptions, we show that \dice~reconstructs the graph $G$ perfectly with high probability $1-\delta$ using $2d + \frac{192}{\kappa^2}d\log p + \frac{64}{\kappa^2}\log\left(\frac{4d}{\delta}\right)$ samples, i.e. provides a converse result \eqref{eq:converse_IT} to the IT lower bound \eqref{eq:IT_lower}. Therefore, \dice~closes the gap to the IT bound, and shows that \eqref{eq:IT_lower} is tight. The computational complexity of \dice~is primarily driven by the iterative support testing step, based on comparison of two neighborhoods, and reads $O(p^{2d+1})$, i.e. is polynomial with respect to $p$ in the setting of sparse graphs where $d=O(1)$.

We also propose a related algorithm termed \emph{Sparse Least-squares Inverse Covariance Estimator} (\slice) which uses a subset of the phases used in \dice. Unlike \dice\ this simpler algorithm allows implementation as a mixed integer  quadratic program (MIQP), enabling the use of modern mixed integer solvers that can be very efficient in practice. As a price for the enhanced computational efficiency, the sample complexity of \slice\ is $d + \frac{32}{\kappa^4}\log \left(\frac{4p^{d+1}}{\delta}\right)$, i.e., roughly a factor $1/\kappa^2$ higher than the IT lower bound (and the sample complexity of \dice). However the sample complexity is still only dependent on the parameters present in the IT lower bound, thus avoiding dependence on additional assumptions such as restricted eigenvalues.

The paper is organized as follows: In Section~\ref{sec:algorithm} we introduce our algorithm \dice\ and its sub-routines, and provide main results of our study. Section~\ref{sec:slice} introduces \slice\ and states the theorem regarding its sample complexity. Rigorous mathematical guarantees on the sample performance of our algorithms are given in Section~\ref{sec:proof}, while Section~\ref{sec:proof_of_lemmas} contains proofs of technical lemmas. We conclude with Section~\ref{sec:conclusions} where we discuss some perspectives and open problems.

\section{DICE: Reconstructing Gaussian Graphical Models with Information Theoretically optimal number of samples}
\label{sec:algorithm}
In this section we provide details of \dice. The three constituent steps are (i) cardinality constrained regression to obtain an estimate of the conditional variance for each variable,  (ii) an iterative support testing method to find a size $d$ neighborhood that contains the right support, and (iii) a clean up phase to eliminate the non-edges in the set found in (ii). For simplicity of notation, we assume that the distribution in consideration has zero mean. All results easily generalize to the non-zero mean case, as stated below.

\subsection{Phase 1: Estimating conditional variances}  \label{subsec:dice_phase1}
The first step of the algorithm obtains an estimate of the conditional variance of each variable $i \in \V$ where the conditioning is for all neighbors of $i$. For each $i \in \V$ our estimate $\Thetahat_{ii}$ of $\Theta_{ii}$  is given by 
\begin{align}
    \frac{1}{\Thetahat_{ii}} = \min_{\betahat \in \mathbbm{R}^{p-1}} \quad & L_{i}(\betahat, \Sigmahat) = \frac{1}{n} \sum_{k=1}^{n} \left(x_i^k + \sum_{j \neq i} \beta_{ij} x_j\right)^2,  \nonumber\\
    \mbox{s.t.} \quad & \|\betahat\|_0 \leq d, \label{eq:l0}
\end{align}
where the $\ell_0$-norm counts the number of non-zero components and $\Sigmahat$ denotes the empirical covariance matrix whose components are given by $\Sigmahat_{ij} = \frac{1}{n} \sum_{k=1}^{n} x_i^k x_j^k$.
\footnote{All results in this paper directly generalize to the case of non-zero mean by replacing the $\Sigmahat$ above by the unbiased covariance estimator given by $\Sigmahat_{ij} = \frac{1}{n-1}\sum_{k=1}^{n}(x_i^k - \bar{x}_i)(x_j^k-\bar{x}_j)$, where $\bar{x}_i = \frac{1}{n}\sum_{k=1}^n x_i^k$.}

Since the $\ell_0$ constraint in \eqref{eq:l0} is equivalent to searching over all possible $\betahat$ with support given by some $A \subset [p]$ with $|A| = d$. The optimization in \eqref{eq:l0} can be re-written as
\begin{align}
 \frac{1}{\Thetahat_{ii}} = \min_{A \subseteq [p] \setminus i\ :\ |A| = d} \quad \min_{\betahat \in \mathbbm{R}^{p-1}: \text{Supp}(\betahat) \subset A} L_i(\betahat,\Sigmahat). \label{eq:cond_var_reform}
\end{align}
Since we will be restricting ourselves to the case when $2d+1 < n$, each $d\times d$ submatrix of $\Sigmahat$ has full rank, and  the inner minimization in \eqref{eq:cond_var_reform} can be explicitly resolved to get
\begin{align}
    \betahat_{iA} = -\Sigmahat_{A A}^{-1} \Sigmahat_{A i}. \label{eq:betahat_def}
\end{align}
The corresponding optimal value is given by
\begin{align}
    L_i^*(A, \Sigmahat) &= L_i(\betahat_{iA},\Sigmahat) = \Sigmahat_{ii} - \Sigmahat_{i A} \Sigmahat_{A A}^{-1} \Sigmahat_{A i} \stackrel{(a)}{=} \left[ \Sigmahat^{-1}_{(iA)(iA)}\right]_{11} \stackrel{(b)}{=} \widehat{\Var}(X_i | X_{A}), \label{eq:optimum} 
\end{align}
where $(a)$ is obtained by using the matrix inversion lemma. We obtain $(b)$ from the standard expression for conditional variance of $X_i$ conditioned on $X_{A}$ in multivariate gaussians, explainig the name of this subsection. Notice that in the limit of large number of samples, the empirical conditional variance is replaced by the true one:
\begin{align}
L_i^*(B_i,\Sigma) = \Var(X_i | X_{B_i}) \quad \forall B_i \subset [p] \setminus \{i\}. \label{eq:cond_var_interp}
\end{align}

\subsection{Phase 2: Iterative Support Testing} \label{subsec:dice_phase2}
In this phase, all candidate neighborhoods are passed through a testing criterion. This phase constitutes the main part of the algorithm \dice. We describe the testing criterion in detail and give intuitive rationale behind it.

Fix $i \in \V$ and consider a candidate neighborhood $B_1 \subset \V \setminus \{ i\}$ with $|B_1| = d$. The goal is to obtain a $B_1$ such that $B_i \subseteq B_1$, where $B_i$ denotes the true neighborhood of $i$. The candidate $B_1$ is tested by using a set of \emph{adversarial neighborhoods} $B_2 \subset \V \setminus \{\{ i\} \cup B_1\}$ with $|B_2| = d$. The testing criterion is based on the regression coefficients $\betahat_{iB_1B_2} = - \Sigmahat_{B_1 B_2,B_1 B_2}^{-1} \Sigmahat_{B_1 B_2,i}$ as in \eqref{eq:betahat_def}, where we use the notation $B_1B_2 = B_1 \cup B_2$ for simplicity. The candidate $B_1$ is deemed to have \textsc{passed} the testing criterion if for all adversarial neighborhoods $B_2$ we have
\begin{align}
   \max_{j \in B_2}  \kappahat_{ij}  := |\betahat_{ij}|\sqrt{\frac{\Thetahat_{ii}}{\Thetahat_{jj}}} < \frac{\kappa}{2}. \label{eq:testing_criterion}
\end{align}
The quantities $\kappahat_{ij}$ can be considered as \emph{estimated} normalized edge strengths.
The testing criterion relies on the fact that when the number of samples is sufficient (which we formalize later in Section~\ref{subsec:main_result}), the quantities $\kappahat_{ij}$ are accurate empirical estimates of the true normalized edge strengths given by $\kappa_{ij} = |\beta_{ij}\beta_{ji}| = \frac{|\Theta_{ij}|}{\sqrt{\Theta_{ii}\Theta_{jj}}}$. The intuitive logic behind the testing criterion in \eqref{eq:testing_criterion} can be explained by considering the following two cases:
\begin{itemize}
    \item \emph{Case 1: The candidate $B_1$ in consideration is such that $B_i \subseteq B_1$, where $B_i$ denotes the true neighborhood of $i$.} In this case, for every adversary $B_2$, and assuming that the estimates $\betahat_{iB_1B_2}$ are accurate enough, for each $j \in B_2$ the estimates $\kappahat_{ij}$ should be close to the true value $\kappa_{ij} = 0$, i.e., $\kappahat_{ij} < \kappa/2$ for all $j \in B_2$ and $B_1$ would \text{pass} the test in \eqref{eq:testing_criterion}.
    \item \emph{Case 2: There exists $j \in B_i \setminus B_1$: } In this case the set $B_1$ has missed a neigbor $j \in B_i$. Here, any adversary $B_2$ such that $B_i \subset B_1 \cup B_2$ will make $B_1$ fail the testing criterion. This is again because, for $j \in B_i \setminus B_1$, the quantity $\kappahat_{ij}$ is expected to be close to its true value $\kappa_{ij} > \kappa$, i.e., $\kappahat_{ij} > \kappa/2$ and hence $B_1$ would \textsc{fail} the test in \eqref{eq:testing_criterion}.
\end{itemize}

\subsection{Phase 3: Eliminate non-edges} \label{subsec:dice_phase3}
Once a set $B_1$ is obtained in Phase $2$ such that $|B_1| = d$ and $B_i \subseteq B_1$, this clean-up phase consists of appending any $B_2$ to $B_1$, computing the estimated normalized couplings $\kappahat_{ij}$ for all $j \in B_1$ and declaring any $j \in B_1$ such that $\kappahat_{ij} < \kappa/2$ as a non-edge. The success of this step also relies on the accuracy of the estimates $\kappahat_{ij}$. The intuitive description of the algorithm and its performance is formalized in the next subsection. 

\subsection{Formal description of the algorithm and main result} \label{subsec:main_result}
In this subsection, we state our main result regarding the sample complexity of \dice, which is formally presented in Algorithm~1.

\begin{algorithm}[!htb]
\SetAlgoLined
\textbf{\emph{Phase 1: Estimatimating conditional variances}} \\ 
\For{$i \in \V$}{
    Estimate $\Thetahat_{ii}$ by solving \eqref{eq:l0}
    }
    
\textbf{\emph{Phase 2: Iterative support testing}} \\ 
\For{$i \in \V$}{
        \For{$B_1 \subset \V \setminus  \{i\}$ s.t. $|B_1| = d$}{
            {\small PASSED} $\leftarrow$  {\small YES} \\
            \For{$B_2 \subset \V \setminus \{B_1 \cup \{i\}\}$ s.t. $|B_2| = d$}{
                Compute $\betahat_{i,B_1 B_2}$ following \eqref{eq:betahat_def}\\
                Estimate $\kappahat_{ij}$ following \eqref{eq:testing_criterion}\\
                \If{$\max_{j \in B_2} \kappahat_{ij} > \kappa/2$}{
                    {\small PASSED} $\leftarrow$ {\small NO} \\
                    {\bf break}
                }
            }
             \If{{\small PASSED} = {\small YES}}{
                    $\tilde{B}_i \leftarrow B_1$ \\
                    {\bf break}
                }
        }
    }
\textbf{\emph{Phase 3: Eliminate non-edges}} \\     
\For{$i \in \V$}{
    Choose any $B_2 \subset \V \setminus \{B_1 \cup \{i\}\}$ s.t. $|B_2| = d$\\
    \For{$j \in \tilde{B}_i$}{
        Compute $\kappahat_{ij}$ following \eqref{eq:testing_criterion}
    }
    $\hat{B}_i \leftarrow \{j \in \tilde{B}_i \mid \kappahat_{ij} > \frac{\kappa}{2}\}$
}
{\bf return} $\hat{B}_i$ for $i \in \V$
\caption{\dice($p$, $d$, $\kappa$)}
\label{alg}
\end{algorithm}

The following is the main result of the paper which proves that the algorithm \dice\  achieves the information theoretic lower bound in \eqref{eq:IT_lower} up to a universal constant.
\begin{theorem}[Converse to IT bound] \label{thm:main}
Given $\delta > 0$, the probability of perfect graph reconstruction using \dice\ is lower bounded as 
\begin{align}
    \prob(\hat{\G} = \G)  > 1-\delta,  
\end{align}
provided that the number of samples satisfies \footnote{In particular, Theorem \ref{thm:main} is valid for a number of samples $n>\frac{320}{\kappa^2}(d\log p + \log(1/\delta))$.} 
\begin{align}
    n > 2d + \frac{192}{\kappa^2}d\log p + \frac{64}{\kappa^2}\log\left(\frac{4d}{\delta}\right).   \label{eq:sample_complexity}
\end{align}
\end{theorem}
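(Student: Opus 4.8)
The plan is to reduce the whole statement to a single high‑probability ``regularity event'' on which, for every node $i$, every index set $A\supseteq B_i$ with $|A|\le 2d$, and every $j\in A$, the estimated normalized coupling $\kappahat_{ij}$ computed from the regression on $A$ is within $\kappa/4$ of its population value $\kappa_{ij}$; on that event the deterministic logic of Phases~2--3 recovers $\G$, and the only probabilistic content is a concentration bound.

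I would start from two exact population facts. For $A\supseteq B_i$ the Markov property gives $X_i=-\sum_{j\in A}\beta_{ij}X_j+\eta_i$ with $\beta_{ij}=\Theta_{ij}/\Theta_{ii}$ (and $\beta_{ij}=0$ for $j\notin B_i$), where $\eta_i\perp X_A$ and $\Var(\eta_i)=1/\Theta_{ii}=\Var(X_i\mid X_A)$; and for any $j$ and any set $T\subseteq[p]\setminus\{j\}$, $\Var(X_j\mid X_T)\ge\Var(X_j\mid X_{[p]\setminus\{j\}})=1/\Theta_{jj}$. These are precisely the two ingredients that remove any dependence on the condition number of $\Theta$. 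Next, for $n>2d$ every $(2d)\times(2d)$ submatrix of $\Sigmahat$ is invertible, and by standard Gaussian/Wishart concentration together with a union bound over the at most $p^{2d}$ relevant index sets, there is an event $\mathcal E_0$ of probability at least $1-\delta/2$ on which $L_i^*(A,\Sigmahat)$ and the Schur complements $\widehat{\Var}(X_j\mid X_{A\setminus j})$ are within relative error of order $\kappa$ of their population values for all $i$, $j$, and all such $A$ (this is where $n\gtrsim(d\log p+\log(1/\delta))/\kappa^2$ first enters). In particular, since $\min_{|A|=d}\Var(X_i\mid X_A)=1/\Theta_{ii}$ is attained at every size‑$d$ superset of $B_i$, closeness of the individual empirical conditional variances already forces $1/\Thetahat_{ii}\in[(1-c\kappa)/\Theta_{ii},(1+c\kappa)/\Theta_{ii}]$ with no ``identifiability gap'' lemma needed, and likewise $\widehat{\Var}(X_j\mid X_{A\setminus j})\ge(1-c\kappa)/\Theta_{jj}$.

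The heart of the argument is then the error of $\betahat_{iA}$. Conditionally on $\{X_A^k\}_{k\le n}$ the residuals $\eta_i^k$ are i.i.d.\ $N(0,1/\Theta_{ii})$ and independent of $X_A$, so $\betahat_{iA}-\beta_{iA}$ is Gaussian with covariance $\tfrac{1}{n\Theta_{ii}}\Sigmahat_{AA}^{-1}$; hence $(\betahat_{iA}-\beta_{iA})_j\sim N\!\big(0,\tfrac{1}{n\,\Theta_{ii}\,\widehat{\Var}(X_j\mid X_{A\setminus j})}\big)$. Multiplying by $\Thetahat_{ii}/\Thetahat_{jj}$ and invoking $\widehat{\Var}(X_j\mid X_{A\setminus j})\gtrsim 1/\Theta_{jj}$ from the previous step cancels every $\Theta$‑factor and leaves a variance of order $1/n$. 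A Gaussian tail bound and a union bound over the $\lesssim 2d\,p^{2d+1}$ triples $(i,A,j)$ then produce an event $\mathcal E_1$, of probability at least $1-\delta/2$, on which $|\kappahat_{ij}-\kappa_{ij}|\le C\sqrt{(d\log p+\log(d/\delta))/n}<\kappa/4$ for all relevant $(i,A,j)$, precisely when $n$ exceeds the bound \eqref{eq:sample_complexity}; the extra $2d$ there is the invertibility requirement.

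On $\mathcal E_0\cap\mathcal E_1$ the rest is deterministic. In Phase~2, any $B_1\supseteq B_i$ passes, because every adversary $B_2$ only exposes non‑edges $j\in B_2$ with $\kappa_{ij}=0$, so $\kappahat_{ij}<\kappa/4<\kappa/2$; conversely, if $B_1\not\supseteq B_i$ one can (for $p\ge 2d+1$) choose an adversary $B_2\supseteq B_i\setminus B_1$, and then for $j^*\in(B_i\setminus B_1)\cap B_2$ one has $\kappahat_{ij^*}>\kappa-\kappa/4>\kappa/2$, so $B_1$ fails. Thus, regardless of enumeration order, Phase~2 outputs some $\tilde B_i$ with $B_i\subseteq\tilde B_i$ and $|\tilde B_i|=d$; appending any $B_2$ in Phase~3 gives a set $A\supseteq B_i$ of size $\le 2d$, whence $\kappahat_{ij}>\kappa/2$ for $j\in B_i$ and $\kappahat_{ij}<\kappa/2$ for $j\in\tilde B_i\setminus B_i$, so $\hat B_i=B_i$; a union over $i\in\V$ (already absorbed above) gives $\prob(\hat\G=\G)>1-\delta$. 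I expect the concentration step to be the main obstacle: getting the $\betahat$‑error bound with a universal constant and no dependence on $\lambda_{\min}(\Theta)$ relies entirely on the cancellation produced by the $\sqrt{\Thetahat_{ii}/\Thetahat_{jj}}$ normalization and the elementary inequality $\Var(X_j\mid X_{A\setminus j})\ge 1/\Theta_{jj}$, and on pushing the $\binom{p}{2d}$‑fold union bound through without paying more than a $d\log p$ factor.
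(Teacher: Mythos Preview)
Your plan is correct and is essentially the paper's own proof: the paper packages your ``regularity events'' into two propositions, one controlling $\Thetahat_{ii}$ via concentration of the empirical conditional variances (your $\mathcal E_0$) and one controlling $\betahat_{ij}$ via the conditional Gaussianity of $\betahat_{iA}$ together with the key inequality $(\Sigmahat_{AA}^{-1})_{jj}\lesssim\Theta_{jj}$ that follows from $\Var(X_j\mid X_{A\setminus j})\ge 1/\Theta_{jj}$ (your $\mathcal E_1$), and then carries out exactly the same deterministic Case~1/Case~2 analysis of Phases~2--3 with the thresholds $\kappa/2$. The cancellation you single out as the crux---that the $\sqrt{\Thetahat_{ii}/\Thetahat_{jj}}$ normalization together with $\Var(X_j\mid X_T)\ge 1/\Theta_{jj}$ eliminates all dependence on $\lambda_{\min}(\Theta)$---is precisely the mechanism the paper isolates in its variance-bound lemma.
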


\section{SLICE: Reconstructing Gaussian Graphical Models with near optimal number of samples using Mixed Integer Quadratic Programming}  \label{sec:slice}
In this section we state the details of the \slice\ algorithm. \slice\ trades-off some optimality with respect to  sample complexity for better computational complexity and enable implementation using a mixed integer quadratic programming formulation. With the rapid progress in mixed integer programming technology, this offers a significant advantage over the exhaustive search required for \dice\ in terms of practical efficiency. The algorithm \slice\ simply utilizes the Phase 1 (Section~\ref{subsec:dice_phase1}) of \dice\ followed by a variation of the product and threshold procedure in Phase 3 (Section~\ref{subsec:dice_phase3}) of \dice\ in order to eliminate non-edges and estimate the exact support. By skipping the iterative neighborhood testing in Phase 2 (Section~\ref{subsec:dice_phase2}), \slice\ improves upon the computational efficiency of \dice, but with a penalty of an additional $\frac{1}{\kappa^2}$ factor in the required number of samples. The various phases of \slice\ are described in the following sections.

\subsection{Phase 1: Least Squares with $\ell_0$-constraint}  \label{subsec:slice_phase1}
The first step of the algorithm is identical to that of \dice, but the purpose is different. While for \dice, the only purpose was to estimate the conditional variances, \slice\ requires the estimates of the regression coefficients:
\begin{align}
    \betahat_i = \argmin_{\beta_i \in \mathbbm{R}^{p-1}} \quad & L_{i}(\beta_i, \Sigmahat) = \frac{1}{n} \sum_{k=1}^{n} \left(x_i^k + \sum_{j \neq i} \beta_{ij} x_j\right)^2,  \nonumber\\
    \mbox{s.t.} \quad & \|\beta_i\|_0 \leq d, \label{eq:obj}
\end{align}

\subsection{Phase 2: Estimate the support} \label{subsec:slice_phase2}
Once the estimates $\betahat_i$ have been obtained for all $i \in \V$, we estimate the edge-set $\hat{\E}$ through the following thresholding procedure
 \begin{align}
     \hat{\E} = \left\{(i,j)\in \V \times \V : \sqrt{|\betahat_{ij} \times \betahat_{ji}|} > \kappa/2\right\}.
 \end{align}
 The estimated graph is then declared as  $\hat{\G} = (\V, \hat{\E})$.

\subsection{Implementation as a mixed integer quadratic program}
Phase $1$ of the \slice\ algorithm has a computational complexity of $O(p^{d+1})$ since it is equivalent to an exhaustive search over all possible size $d$ neighborhood of each vertex $i \in \V$. The second step can be implemented with a much lower computational complexity of $O(pd)$ leading to an overall complexity of $O(p^{d+1})$. 

However when $d$ is not small enough, performing an exhaustive search can be prohibitively expensive. Instead, the problem can be reformulated as a Mixed Integer Quadratic Program (MIQP), which in practice is significantly faster, especially when using modern mixed integer solvers such as CPLEX or GUROBI. In the context of compressive sensing and sparse regression, the use of MIQP has been explored in \cite{bertsimas2016best} to solve a $\ell_0$ constrained quadratic objective. We present one such formulation:
\begin{subequations}
\begin{align}
\min_{\beta_i \in \mathbbm{R}^{p-1}} \quad &  \beta_i^T \Sigmahat_{\bar{i}\bar{i}} \beta_i +  2 \Sigmahat_{i\bar{i}} \beta_i +  \Sigmahat_{ii} \\
\mbox{s.t.} \quad & s_{ij} L \leq \beta_{ij} \leq s_{ij} U, \quad \forall j \neq i \\
& \sum_{j \neq i} s_{ij} = d, \\
& s_{ij} \in \{0,1\}, \quad \forall j \neq i.
\end{align}
\end{subequations}
In the above $L$ and $U$ denote known or estimated upper and lower bounds on the regression variables. For a more detailed discussion on obtaining these bounds, and formulations that avoid them, we refer the reader to \cite{bertsimas2016best}.

\subsection{Sample complexity of \slice} \label{subsec:slice_main_result}
In this subsection, we state the theoretical result regarding the sample complexity of \slice.
\begin{theorem}[Sample complexity of \slice] \label{thm:slice}
Given $\delta > 0$, the probability of perfect graph reconstruction using \textsc{Slice} is lower bounded as $ \p(\hat{\G} = \G)  > 1-\delta,$ provided that the number of samples satisfies
\begin{align}
    n > d + \frac{32}{\kappa^4}\log \left(\frac{4p^{d+1}}{\delta}\right).
\end{align}
\end{theorem}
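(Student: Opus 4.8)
The plan is to show that a single high-probability event on $\Sigmahat$ forces both phases of \slice\ to behave correctly, and then to control that event by a union bound over $(d+1)$-element index sets; since \slice\ consists of Phase~1 of \dice\ followed by a product-and-threshold step, I would reuse the concentration machinery already developed for Theorem~\ref{thm:main} and only redo the accounting with the looser sample requirement. For an index set $S\subseteq\V$ with $|S|=d+1$, its principal submatrix $\Sigma_S$ of the covariance $\Sigma=\Theta^{-1}$ is positive definite, so $\Sigma_S^{-1/2}$ is well defined; let $\mathcal B$ be the event that $\|\Sigma_S^{-1/2}\Sigmahat_S\Sigma_S^{-1/2}-I\|\le\epsilon$ (operator norm) for all such $S$, with $\epsilon=\kappa^2/16$ (the numerical constant is fixed at the end so that the bound below matches the theorem). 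On $\mathcal B$ every $(d+1)\times(d+1)$ submatrix of $\Sigmahat$ is invertible (here $n>d$ is used), so the closed forms \eqref{eq:betahat_def}--\eqref{eq:optimum} are valid, and writing $\Sigmahat_S^{-1}=\Sigma_S^{-1/2}(I+E)^{-1}\Sigma_S^{-1/2}$ with $\|E\|\le\epsilon$ and expanding $(I+E)^{-1}$ yields, for every $i\ne j$ in $S$ (with $A=S\setminus\{i\}$),
\begin{align}
\Bigl|\frac{L_i^*(A,\Sigmahat)}{L_i^*(A,\Sigma)}-1\Bigr|\ \le\ \epsilon,\qquad \bigl|\betahat^{(S)}_{ij}-\beta^{(S)}_{ij}\bigr|\ \le\ 3\epsilon\,\sqrt{\frac{[\Sigma_S^{-1}]_{jj}}{[\Sigma_S^{-1}]_{ii}}},\label{eq:plan_conc}
\end{align}
where $\beta^{(S)}_{ij}$ is the population coefficient of $X_j$ in the least-squares regression of $X_i$ on $X_{S\setminus\{i\}}$ and $\betahat^{(S)}_{ij}=-[\Sigmahat_S^{-1}]_{ij}/[\Sigmahat_S^{-1}]_{ii}$ its empirical counterpart (which is exactly the component $\betahat_{ij}$ of the solution of \eqref{eq:obj} when $S=\{i\}\cup\hat A_i$ and $\hat A_i$ is the size-$d$ support produced by Phase~1 at $i$).

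\emph{Phase~1 captures a superset of each neighborhood.} By the Markov property, if $B_i\subseteq A$ then conditioning on $X_{S\setminus\{i\}}$ is as informative about $X_i$ as conditioning on $X_{B_i}$: $\beta^{(S)}_{ij}=\beta_{ij}:=-\Theta_{ij}/\Theta_{ii}$ for all $j\in A$ and $L_i^*(A,\Sigma)=\Var(X_i\mid X_A)=1/\Theta_{ii}$. If instead some neighbor $j^\ast\in B_i\setminus A$, put $C=(A\cup B_i)\setminus\{j^\ast\}$, so $A\subseteq C$ and $B_i\subseteq C\cup\{j^\ast\}$; conditionally on $X_C$ the variable $X_i$ depends on the remaining variables only through $X_{j^\ast}$, with coefficient $-\Theta_{ij^\ast}/\Theta_{ii}$, so by the law of total variance
\begin{align}
L_i^*(A,\Sigma)=\Var(X_i\mid X_A)\ \ge\ \Var(X_i\mid X_C)\ =\ \frac{1}{\Theta_{ii}}+\Bigl(\frac{\Theta_{ij^\ast}}{\Theta_{ii}}\Bigr)^2\Var(X_{j^\ast}\mid X_C)\ \ge\ \frac{1+\kappa^2}{\Theta_{ii}},\label{eq:plan_gap}
\end{align}
using $\Var(X_i\mid X_{C\cup\{j^\ast\}})=1/\Theta_{ii}$, $\Var(X_{j^\ast}\mid X_C)\ge1/\Theta_{j^\ast j^\ast}$ and $|\Theta_{ij^\ast}|/\sqrt{\Theta_{ii}\Theta_{j^\ast j^\ast}}\ge\kappa$. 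Because some size-$d$ set $A_0\supseteq B_i$ exists, \eqref{eq:plan_conc} gives $L_i^*(\hat A_i,\Sigmahat)\le(1+\epsilon)/\Theta_{ii}$, which by \eqref{eq:plan_gap} (with $\epsilon=\kappa^2/16<\kappa^2/(2+\kappa^2)$) is incompatible with $\hat A_i$ missing any neighbor; hence $B_i\subseteq\hat A_i$, and $\Thetahat_{ii}=1/L_i^*(\hat A_i,\Sigmahat)$ is within a factor $1\pm\epsilon$ of $\Theta_{ii}$.

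\emph{Thresholding is exact.} Fix $(i,j)$ and set $S_i=\{i\}\cup\hat A_i$. Since $B_i\subseteq\hat A_i$, the computation above gives $[\Sigma_{S_i}^{-1}]_{ii}=\Theta_{ii}$ and $\beta^{(S_i)}_{ij}=\beta_{ij}$ for every $j\in\hat A_i$, while $[\Sigma_{S_i}^{-1}]_{jj}\le\Theta_{jj}$ (because $\Var(X_j\mid X_{S_i\setminus\{j\}})\ge1/\Theta_{jj}$), so \eqref{eq:plan_conc} becomes $\bigl|\betahat_{ij}-\beta_{ij}\bigr|\le 3\epsilon\sqrt{\Theta_{jj}/\Theta_{ii}}$, and analogously $\bigl|\betahat_{ji}-\beta_{ji}\bigr|\le 3\epsilon\sqrt{\Theta_{ii}/\Theta_{jj}}$ whenever $i\in\hat A_j$. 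If $(i,j)\in\E$ then $j\in B_i\subseteq\hat A_i$ and $i\in B_j\subseteq\hat A_j$, and since $|\beta_{ij}|=\kappa_{ij}\sqrt{\Theta_{jj}/\Theta_{ii}}$ and $|\beta_{ji}|=\kappa_{ij}\sqrt{\Theta_{ii}/\Theta_{jj}}$, multiplying the two displayed bounds and using $\kappa_{ij}\ge\kappa$, $\kappa\le1$ gives $\sqrt{|\betahat_{ij}\betahat_{ji}|}\ge\kappa_{ij}-3\epsilon\ge\kappa-\tfrac{3}{16}\kappa=\tfrac{13}{16}\kappa>\tfrac\kappa2$. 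If $(i,j)\notin\E$ then either $\betahat_{ij}=0$ (if $j\notin\hat A_i$) or $\beta_{ij}=0$ so $|\betahat_{ij}|\le3\epsilon\sqrt{\Theta_{jj}/\Theta_{ii}}$, and similarly at $j$, whence $\sqrt{|\betahat_{ij}\betahat_{ji}|}\le3\epsilon=\tfrac{3}{16}\kappa^2\le\tfrac{3}{16}\kappa<\tfrac\kappa2$. Thus on $\mathcal B$ the thresholding of Phase~2 outputs $\hat\E=\E$, i.e. $\hat\G=\G$.

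\emph{Probability of $\mathcal B$, and the main obstacle.} It remains to bound $\prob(\mathcal B^c)$. For a fixed $S$ with $|S|=d+1$, $\Sigma_S^{-1/2}\Sigmahat_S\Sigma_S^{-1/2}$ is the sample covariance of $n$ i.i.d.\ standard Gaussians in $\R^{d+1}$, so the standard concentration inequality for Gaussian sample covariances gives $\prob(\|\Sigma_S^{-1/2}\Sigmahat_S\Sigma_S^{-1/2}-I\|>\epsilon)\le 2\exp(-c\,n\,\min\{\epsilon,\epsilon^2\}+C(d+1))$ for absolute constants $c,C$; a union bound over the $\binom{p}{d+1}\le p^{d+1}$ index sets $S$, the choice $\epsilon=\kappa^2/16$, and a careful tracking of constants then yield the stated $n>d+\tfrac{32}{\kappa^4}\log(4p^{d+1}/\delta)$. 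The step carrying all the content is establishing \eqref{eq:plan_conc} in its \emph{whitened} form --- with a right-hand side depending on $d$ and $n$ alone and not on the conditioning of $\Sigma_S$ (equivalently of $\Theta$), which is precisely what keeps the sample complexity free of any restricted-eigenvalue factor --- but this bound, together with the conditional-variance gap \eqref{eq:plan_gap}, is already contained in the proof of Theorem~\ref{thm:main}, whose Phase~1 and Phase~3 are exactly the two ingredients \slice\ uses, so the remaining work is elementary bookkeeping. Finally, note that the binding constraint on $\epsilon$ is $\epsilon=O(\kappa^2)$, coming from the gap \eqref{eq:plan_gap}, whereas the thresholding alone would tolerate $\epsilon=O(\kappa)$; this is the source of the $\kappa^{-4}$ dependence of \slice, versus the $\kappa^{-2}$ of \dice.
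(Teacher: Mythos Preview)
Your proposal is correct and follows the same two-step skeleton as the paper: (i) a conditional-variance gap showing that any size-$d$ set missing a neighbor has strictly larger loss, so Phase~1 returns a superset $\hat A_i\supseteq B_i$ (the paper's Proposition~\ref{lem:superset}), and (ii) regression-coefficient accuracy on the selected support, so the product-and-threshold step is exact (the paper's Proposition~\ref{lem:pp}). Your identification of the $\kappa^{-4}$ factor as coming from the $O(\kappa^2)$ gap in \eqref{eq:plan_gap} is exactly the paper's reasoning as well.

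The difference is in how the concentration is obtained. The paper does not pass through an operator-norm event $\mathcal B$; instead it exploits exact Wishart distributional facts: for each $(d{+}1)$-set $S$, the Schur complement $L_i^*(A,\Sigmahat)$ is an exact $\chi^2_{n-d}$ multiple of $L_i^*(A,\Sigma)$ (Lemma~\ref{lem:perturbation}), and $\betahat_{ij}\mid\Sigmahat_{AA}$ is exactly Gaussian with mean $\Theta_{ij}/\Theta_{ii}$ and variance controlled by another $\chi^2$ (Lemmas~\ref{lem:conditional_distribution}--\ref{lem:variance_bound}). These chi-squared Chernoff bounds, combined with the gap Lemma~\ref{lem:multiplicative_gap} (which gives the slightly sharper factor $(1-\kappa^2)^{-1}$ rather than your $1+\kappa^2$) and the choice $\epsilon=\kappa^2/2$, are what produce the precise constant $32$. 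Your whitened operator-norm event is cleaner---one event controls everything---but the standard sample-covariance concentration inequalities (via $\epsilon$-nets) carry larger absolute constants, so the ``careful tracking of constants'' you defer will not reproduce $32/\kappa^4$; you would get the right rate with a worse prefactor. Also, your appeal to ``the proof of Theorem~\ref{thm:main}'' should be read with care: the $\betahat$ bounds there are stated for sets of size $2d$, and for \slice\ the paper re-instantiates Lemmas~\ref{lem:conditional_distribution}--\ref{lem:variance_bound} with $|A|=d$, which is the version you actually need.
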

\slice\ retains a critical advantage of \dice, which is its insensitivity to parameters absent in the IT lower bound \ref{eq:IT_lower}. In the next subsection, we demonstrate this advantage of \slice\ through some illustrative numerical examples.

\subsection{Numerical illustration of condition number independence of \slice}

In this section, we construct a very simple counterexample, consisting of a sequence of matrices with growing condition number $\frac{\lambda_{max}}{\lambda_{min}}$ but fixed minimum normalized edge strength $\kappa$.
The primary purpose of this experiment is to demonstrate that the sample complexity of existing reconstruction algorithms are indeed sensitive to the condition number of the precision matrix $\Theta$, whereas the sample complexity dictated by the IT lower bound in \eqref{eq:IT_lower} as well as our proposed algorithm  \textsc{Slice} shows no such dependence.

The counter example sequence inspired by \eqref{eq:counter_example} consists of a triangle with two weak links and one stronger link and a collection of independent nodes. This family of GGMs is parametrized by the following inverse covariance matrix,
\begin{align}
\Theta_{\kappa,\epsilon,\sigma}=\left[ \begin{array}{cccc}
        1       & \kappa    &    \kappa & 0 \\
        \kappa  & 1         & 1-\epsilon& 0 \\
        \kappa  & 1-\epsilon& 1         & 0 \\
        0       & 0         & 0         & \hspace{-0.25cm} \frac{1}{\sigma^2} I_{(p-3)\times(p-3)} \\
        \end{array}
        \right], \quad
        \label{eq:matrix}
\end{align}
where $1-\epsilon$ is the strength of the strong link, $\kappa < 1-\epsilon$ is the strength of the weak links and $\sigma^2$ is the variance of the independent nodes. This family of graphs are chosen such that $\kappa$ in \eqref{eq:matrix} corresponds to the minimum normalized edge strength in \eqref{eq:kappa_def}. Note that the maximum degree is $d=2$. This problem can be interpreted as detecting a triangle within a cloud of independent nodes, a situation that is very plausible in practice. 

The simulations are performed for matrix dimension $p=200$ and $n=175$ samples which satisfies $n<p$. We repeat the reconstruction procedure 50 times with independent samples for different values of $\sigma^2 \in \{1,\ldots,10^{4}\}$ while $\kappa=0.4$ and $\epsilon=0.01$ are fixed. The regularizer parameters in \textsc{Aclime}, \textsc{Lasso} and \textsc{Graph Lasso} have been optimized to yield the best possible results for each value of $\sigma^2$, an advantage that cannot be availed in practice. \textsc{Slice} inherently does not have this issue.

For each algorithm we compute its estimate $\hat{\kappa}_{12}$ and $\hat{\kappa}_{14}$ of the normalized link values (1,2) and (1,4),
\begin{align}
    \kappa_{12} = \sqrt{\frac{\Theta_{12} \Theta_{21}}{\Theta_{11} \Theta_{22}}}, \quad \kappa_{14} = \sqrt{\frac{\Theta_{14} \Theta_{41}}{\Theta_{11} \Theta_{44}}}.\label{eq:link_strength}
\end{align}

We declare that an algorithm fails to reconstruct the graph whenever $\hat{\kappa}_{12} \leq \hat{\kappa}_{14}$: if this condition is satisfied, then links (1,2) or/and (1,4) are incorrectly reconstructed regardless of the thresholding procedure. Note that this choice of reconstruction failure criterion is quite generous. It is very unlikely that one can devise a successful thresholding procedure solely based on the criterion $\hat{\kappa}_{12} \leq \hat{\kappa}_{14}$ when $\hat{\kappa}_{12}$ and $\hat{\kappa}_{14}$ are close to each other. 
This is particularly true for several reconstructions provided by \textsc{Graph Lasso} and \textsc{Aclime} as illustrated in Figure~\ref{fig:scatter}, whereas the procedure appears to provide no advantage to \textsc{SLICE}.
Note that we compare normalized link strengths $\kappa_{ij}$, which are invariant to rescaling of the $\Theta$ matrix, instead of matrix element ratios $\beta_{ij}$ or matrix elements $\Theta_{ij}$. Thus reconstruction based on the latter quantities would fail for some rescaling of $\Theta$.

\begin{figure}[!bth]
\begin{center}
\raisebox{0.00\height}{\includegraphics[width=0.8\columnwidth]{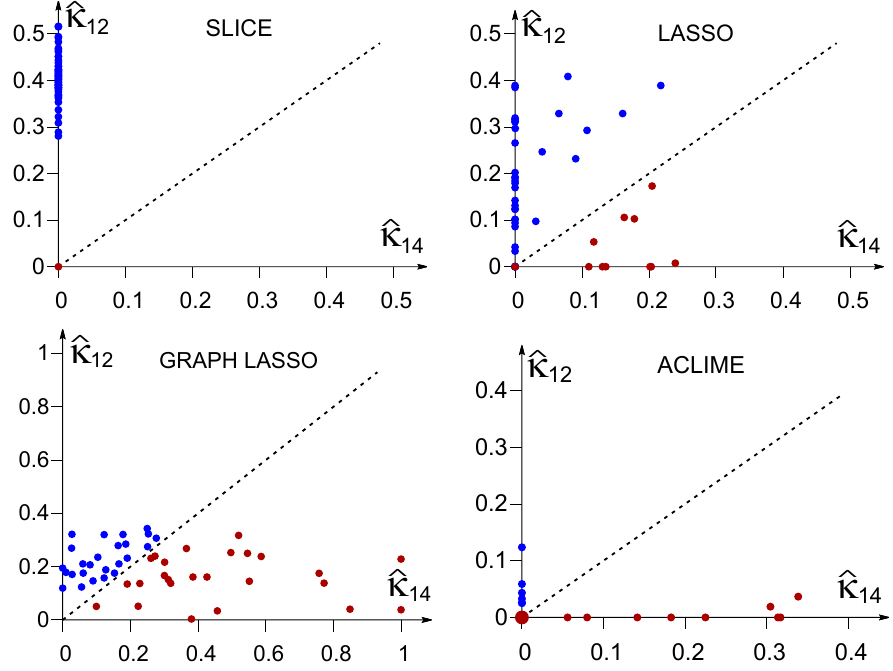}}
\vskip -0.1in
\caption{{\bf Illustration of reconstruction failure for $\sigma^2=\sqrt{1000}$} We show the scatter plot of reconstructed values $\hat{\kappa}_{12}$ and $\hat{\kappa}_{14}$ obtained through $50$ trial reconstructions. We declare that reconstruction fails when $\hat{\kappa}_{12}<\hat{\kappa}_{14}$ which corresponds to points in the lower right part of the graphs (highlighted in red). \textsc{Slice} demonstrates an almost ideal behavior with $\hat{\kappa}_{14}=0$ and $\hat{\kappa}_{12}>\kappa/2=0.2$. Note that \textsc{Graph Lasso} and \textsc{Aclime} systematically yield $\hat{\kappa}_{14}>\hat{\kappa}_{12}$ although (1,4) is not an existing edge. }
\label{fig:scatter}
\end{center}
\vskip -0.1in
\end{figure}

Simulation results are summarized in Figure~\ref{fig:p_failure} where the probability of failure is plotted against the variance $\sigma^2$ of the independent nodes. For $\sigma^2$ close to one, all four algorithms succeed with high-probability and are able to correctly identify that there is a link $(1,2)$ and no link between $(1,4)$. However for larger value of $\sigma^2$, the probability of failure of \textsc{Aclime}, \textsc{Lasso} and \textsc{Graph Lasso} is close to one while \textsc{Slice} remains insensitive to changes in $\sigma^2$. This simple example highlights that when the sample complexity of algorithms depends on parameters not present in the information theoretic bound, the graph reconstruction can be adversarially affected even 
by the presence of additional independent nodes.

\begin{figure}[!th]
\begin{center}
\raisebox{0.00\height}{\includegraphics[width=0.66\columnwidth]{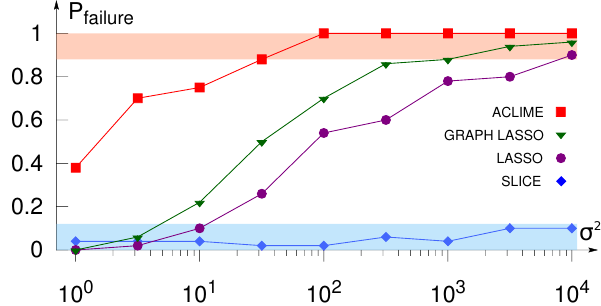}}
\vskip -0.1in
\caption{{\bf Detecting a triangle in a cloud of independent nodes.} Empirical probability of failure averaged over 50 trial reconstructions with a fixed number of samples $n=175$. The values for all algorithms except \textsc{Slice} have been optimized over regularization coefficients. All algorithms except \textsc{SLICE} fail for large $\sigma^2$ values.}
\label{fig:p_failure}
\end{center}
\vskip -0.16in
\end{figure}

In Appendix~\ref{sec:scalability_slice}, we conduct additional numerical studies on synthetic and real data that show that the use of modern Mixed-Integer Quadratic Programming solvers allows one to scale up \slice\ even to relatively large problems.

\section{Proof of main results} \label{sec:proof}
In this section, we prove Theorem~\ref{thm:main} and Theorem~\ref{thm:slice}, along with essential propositions. 

\subsection{Proof of Theorem~\ref{thm:main}}
As described intuitively in Section~\ref{sec:algorithm}, the success of \dice\  relies on the fact that the estimates $\kappahat_{ij}$ in \eqref{eq:testing_criterion} are accurate. This fact is established in the following two propositions.
\begin{proposition}[Accuracy of $\Thetahat_{ii}$] \label{prop:l0_estimates}
   Given $\epsilon > 0$, the diagonal entries reconstructed by \eqref{eq:l0} satisfies 
\begin{align}
    \frac{\Theta_{ii}}{1+\epsilon} \leq \Thetahat_{ii} \leq \frac{\Theta_{ii}}{1-\epsilon}, \quad \forall i \in \V ,  \label{eq:l0_estimates}
\end{align}
with probability at least $1 - \delta_1$ provided that the number of samples satisfies
\begin{align}
    n > d + \frac{8}{\epsilon^2} d\log p  +  \frac{8}{\epsilon^2}\log \left(\frac{2d}{\delta_1} \right). \label{eq:n_for_l0}
\end{align}
\end{proposition}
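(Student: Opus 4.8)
The plan is to deduce Proposition~\ref{prop:l0_estimates} from a \emph{uniform relative concentration} of the empirical conditional variance
$L_i^*(A,\Sigmahat)=\Sigmahat_{ii}-\Sigmahat_{iA}\Sigmahat_{AA}^{-1}\Sigmahat_{Ai}$
around its population value $L_i^*(A,\Sigma)=\Var(X_i\mid X_A)$, holding simultaneously for all $i\in\V$ and all $A\subseteq[p]\setminus\{i\}$ with $|A|=d$. Concretely, I will show that if $n\gtrsim d+\epsilon^{-2}\big(d\log p+\log(1/\delta_1)\big)$, then with probability at least $1-\delta_1$,
\begin{align}
(1-\epsilon)\,L_i^*(A,\Sigma)\ \le\ L_i^*(A,\Sigmahat)\ \le\ (1+\epsilon)\,L_i^*(A,\Sigma)\qquad \text{for all }i\in\V,\ |A|=d. \nonumber
\end{align}
Granting this, \eqref{eq:l0_estimates} follows by combining it with the reformulation \eqref{eq:cond_var_reform}, $1/\Thetahat_{ii}=\min_{|A|=d}L_i^*(A,\Sigmahat)$, and two elementary facts about Gaussian conditional variances. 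First, $1/\Theta_{ii}=\Var(X_i\mid X_{[p]\setminus i})$, and by the Markov property this equals $\Var(X_i\mid X_{B_i})$ for the true neighborhood $B_i$ (with $|B_i|\le d$); hence any fixed $d$-set $B_i^{+}\supseteq B_i$ satisfies $L_i^*(B_i^{+},\Sigma)=1/\Theta_{ii}$, so $1/\Thetahat_{ii}\le L_i^*(B_i^{+},\Sigmahat)\le(1+\epsilon)/\Theta_{ii}$, giving the right-hand inequality in \eqref{eq:l0_estimates}. Second, for \emph{any} $A\subseteq[p]\setminus i$, conditioning on fewer variables cannot decrease variance, so $L_i^*(A,\Sigma)=\Var(X_i\mid X_A)\ge\Var(X_i\mid X_{[p]\setminus i})=1/\Theta_{ii}$; applying this to the (data-dependent) minimizer $\hat A_i$ gives $1/\Thetahat_{ii}=L_i^*(\hat A_i,\Sigmahat)\ge(1-\epsilon)L_i^*(\hat A_i,\Sigma)\ge(1-\epsilon)/\Theta_{ii}$, the left-hand inequality in \eqref{eq:l0_estimates}.

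The heart of the argument is the uniform relative concentration, which I will establish by whitening each relevant submatrix by its \emph{own} population covariance. Fix $i$ and $A$ and set $S=\{i\}\cup A$, so $|S|=d+1$. By \eqref{eq:optimum}, $L_i^*(A,M)$ equals the Schur complement $M_{ii}-M_{iA}M_{AA}^{-1}M_{Ai}$ of the $A$-block of the principal submatrix $M_{SS}$, equivalently $1/\big[(M_{SS})^{-1}\big]_{ii}$; in particular it is a function of $M_{SS}$ alone, and $\Sigmahat_{SS}$ is invertible once $n>d+1$ (we in fact impose $n>2d+1$, as needed in later phases). Write $\Sigmahat_{SS}=\Sigma_{SS}^{1/2}(I+E_S)\Sigma_{SS}^{1/2}$ with $E_S=\Sigma_{SS}^{-1/2}\Sigmahat_{SS}\Sigma_{SS}^{-1/2}-I=\tfrac1n\sum_{k=1}^{n}z^k (z^k)^{\top}-I$, where $z^k:=\Sigma_{SS}^{-1/2}x^k_{S}$ are i.i.d.\ $N(0,I_{d+1})$. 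If the spectral norm satisfies $\|E_S\|\le\epsilon<1$, then $(1-\epsilon)\Sigma_{SS}\preceq\Sigmahat_{SS}\preceq(1+\epsilon)\Sigma_{SS}$, hence $\tfrac{1}{1+\epsilon}\big[(\Sigma_{SS})^{-1}\big]_{ii}\le\big[(\Sigmahat_{SS})^{-1}\big]_{ii}\le\tfrac{1}{1-\epsilon}\big[(\Sigma_{SS})^{-1}\big]_{ii}$, and taking reciprocals gives exactly the displayed two-sided bound. The crucial point is that the whitening uses the \emph{true} submatrix $\Sigma_{SS}$, local to $S$: no global spectral quantity of $\Sigma$ (such as its condition number) ever enters, which is what keeps the estimate free of extraneous assumptions.

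Finally, one must control $\|E_S\|$ uniformly over the $\binom{p}{d+1}\le p^{d+1}$ choices of $S$. For a single $S$, $I+E_S$ is the sample covariance of $n$ i.i.d.\ standard Gaussian vectors in $\R^{d+1}$, so a standard $\tfrac14$-net argument over the unit sphere of $\R^{d+1}$ (net cardinality $\le 9^{d+1}$) together with the sub-exponential tail of a $\chi^2_n$ random variable yields $\prob(\|E_S\|>\epsilon)\le 2\cdot 9^{d+1}e^{-cn\epsilon^2}$ for a universal $c>0$ and $\epsilon\in(0,1)$. A union bound over all $S$ bounds the total failure probability by $2(9p)^{d+1}e^{-cn\epsilon^2}$, which is at most $\delta_1$ provided $n\ge \tfrac{1}{c\epsilon^2}\big((d+1)\log(9p)+\log(2/\delta_1)\big)$; absorbing the universal constant and adding the invertibility requirement produces a bound of the form \eqref{eq:n_for_l0}. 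The main obstacle is precisely this uniform operator-norm concentration together with the bookkeeping that confines its dependence to $p$, $d$ and $\delta_1$ (through $\epsilon$); the rest is elementary linear algebra and monotonicity of conditional variances.
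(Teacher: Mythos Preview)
Your high-level deduction---derive \eqref{eq:l0_estimates} from a uniform two-sided bound on $L_i^*(A,\Sigmahat)/L_i^*(A,\Sigma)$ by (i) evaluating at a $d$-superset $B_i^+\supseteq B_i$ and (ii) using monotonicity of conditional variance---is exactly what the paper does. The difference lies in how you establish that uniform concentration. The paper exploits a Gaussian-specific exact distributional identity: for any fixed $A$ with $|A|=d$, the Schur complement $\Sigmahat_{ii}-\Sigmahat_{iA}\Sigmahat_{AA}^{-1}\Sigmahat_{Ai}$ is distributed as $L_i^*(A,\Sigma)\cdot\chi^2_{n-d}/(n-d)$ (a standard Wishart fact), and then applies a Chernoff bound on a single $\chi^2_{n-d}$ variable, giving tail probability $2e^{-(n-d)\epsilon^2/8}$ per pair $(i,A)$. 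The union bound then produces the precise constant $8/\epsilon^2$ in \eqref{eq:n_for_l0} with no extraneous $\log 9$ or similar factors.

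Your route---whiten the full $(d{+}1)\times(d{+}1)$ block and control $\|E_S\|$ by an $\epsilon$-net plus $\chi^2$ tails---is correct and yields the right scaling in $p,d,\epsilon,\delta_1$, but it is strictly looser: the net contributes a $9^{d+1}$ factor and the operator-norm event is stronger than what you need (only a single diagonal entry of the inverse matters). So ``of the form \eqref{eq:n_for_l0}'' is honest, but you will not recover the stated constant $8$. What your approach buys is robustness: it would extend essentially unchanged to sub-Gaussian data, whereas the paper's exact $\chi^2$ identity is Gaussian-only. If you want the sharp constant as stated, replace the net argument by the Schur-complement-of-Wishart identity; otherwise your proof is fine as a correct alternative with unspecified universal constants.
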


\begin{proposition}[Accuracy of $\betahat_{ij}$] \label{prop:betahat_estimates}
    Given $\epsilon>0$, the regression coefficients $\betahat_{ij}$ satisfy
    \begin{align}
        \left|\betahat_{ij} - \frac{\Theta_{ij}}{\Theta_{ii}} \right| \leq \epsilon \sqrt{\frac{\Theta_{jj}}{\Theta_{ii}}}, \quad \forall j \in A,\quad  \forall A \subset \V \setminus i \quad \mbox{s.t.}\quad  B_i \subset A, \quad  |A| = 2d,  \label{eq:betahat_estimates}
    \end{align}
    with probability at least $1 - \delta_2$, provided that the number of samples satisfies
    \begin{align}
        n > 2d + \frac{8}{\epsilon^2}d\log p + \frac{4}{\epsilon^2}\log\left(\frac{2d}{\delta_2} \right). \label{eq:n_for_betahat}
    \end{align}
    The quantities $\betahat_{ij}$ are computed as in \eqref{eq:betahat_def}.
\end{proposition}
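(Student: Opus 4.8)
\textbf{Proof proposal for Proposition~\ref{prop:betahat_estimates}.}

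The plan is to write the estimation error $\betahat_{iA} - \beta_{iA}^*$, where $\beta_{iA}^* = (\Theta_{ij}/\Theta_{ii})_{j\in A}$ is the population regression vector of $X_i$ on $X_A$ (this identification holds because $B_i\subseteq A$, so conditioning on $X_A$ is the same as conditioning on the true neighborhood plus irrelevant variables, and the population least-squares coefficients are exactly $-\Sigma_{AA}^{-1}\Sigma_{Ai} = (\Theta_{ij}/\Theta_{ii})_{j\in A}$ up to sign conventions). From \eqref{eq:betahat_def} we have $\betahat_{iA} = -\Sigmahat_{AA}^{-1}\Sigmahat_{Ai}$, and a standard algebraic manipulation gives
\begin{align}
  \betahat_{iA} - \beta_{iA}^* = \Sigmahat_{AA}^{-1}\left( \Sigma_{AA}\beta_{iA}^* + \Sigmahat_{Ai} - (\Sigmahat_{AA}-\Sigma_{AA})\beta_{iA}^* - \Sigma_{AA}\beta_{iA}^*\right),
\end{align}
which, using $\Sigma_{AA}\beta_{iA}^* = -\Sigma_{Ai}$, collapses to $\betahat_{iA}-\beta_{iA}^* = -\Sigmahat_{AA}^{-1}\big((\Sigmahat_{Ai}-\Sigma_{Ai}) + (\Sigmahat_{AA}-\Sigma_{AA})\beta_{iA}^*\big)$. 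The vector in the parenthesis is $\frac1n\sum_k x_A^k(x_i^k + \langle x_A^k,\beta_{iA}^*\rangle) - \Ex[X_A(X_i+\langle X_A,\beta_{iA}^*\rangle)]$, i.e.\ the empirical mean of $X_A$ times the population residual $R_i = X_i - \Ex[X_i\mid X_A]$, whose expectation is zero by orthogonality. So the error is $-\Sigmahat_{AA}^{-1}$ times a zero-mean empirical average of products of jointly Gaussian variables, and I want to bound it coordinatewise in the right normalization.

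The key steps, in order: (i) reduce to a $2d$-dimensional sub-problem — since $|A|=2d$ and only $2d+1$ coordinates are involved, all concentration is dimension-free in $p$ and the $\log p$ in \eqref{eq:n_for_betahat} will come only from a union bound over the $\binom{p}{2d}\le p^{2d}$ choices of $A$ and the $2d$ coordinates $j$; (ii) control the Gram matrix: show $\Sigmahat_{AA}^{-1}$ is close to $\Sigma_{AA}^{-1}$, or more precisely that the relevant quadratic forms are within a $(1\pm c\epsilon)$ factor, using a Gaussian covariance concentration bound (e.g.\ the one already invoked for Proposition~\ref{prop:l0_estimates}) applied to the whitened variables $\Sigma_{AA}^{-1/2}X_A$; (iii) control the cross term: for a fixed unit direction, $\langle u, \Sigma_{AA}^{-1/2}(\Sigmahat_{Ai}-\Sigma_{Ai} + \cdots)\rangle$ is an empirical average of products of two correlated Gaussians (a coordinate of the whitened design times the residual $R_i$, which has variance $\Var(X_i\mid X_A)=1/\Theta_{ii}$), so it concentrates like a sub-exponential / product-of-Gaussians average with the Bernstein-type rate $\exp(-cn\min(t^2,t))$; (iv) assemble: conditioning on the Gram event, bound $|\betahat_{ij}-\beta_{ij}^*|$ coordinatewise by extracting the $j$-th coordinate, which introduces the normalization $\sqrt{(\Sigma_{AA}^{-1})_{jj}}\cdot\sqrt{\Var(X_i\mid X_A)}$; the crucial observation is that $(\Sigma_{AA}^{-1})_{jj} \le \Theta_{jj}$ — inverting a principal submatrix only increases diagonal entries of the inverse (by the interlacing/Schur-complement inequality $(\Sigma_{AA}^{-1})_{jj}\le (\Sigma^{-1})_{jj}=\Theta_{jj}$) — and $\Var(X_i\mid X_A)\le \Var(X_i\mid X_{B_i}) = 1/\Theta_{ii}$, so the product is at most $\Theta_{jj}/\Theta_{ii}$, which is exactly the normalization in \eqref{eq:betahat_estimates}. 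Then choose the deviation level proportional to $\epsilon$ and set the per-event failure probability to $\delta_2/(2d \cdot p^{2d})$, which after taking logs yields the sample requirement \eqref{eq:n_for_betahat} with the stated constants $8/\epsilon^2$ and $4/\epsilon^2$.

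The main obstacle is step (iii)/(iv) done with the right normalization and sharp enough constants: one must phrase the concentration so that the error naturally comes out scaled by $\sqrt{\Theta_{jj}/\Theta_{ii}}$ rather than by crude operator-norm bounds on $\Sigma_{AA}^{-1}$ that would reintroduce a condition-number dependence — which is the whole point of the paper. The clean way is to work entirely in whitened coordinates $Z_A = \Sigma_{AA}^{-1/2}X_A$ (so $\Sigmahat$ of $Z$ concentrates around the identity, no conditioning-number factor appears), write $\betahat_{ij}-\beta_{ij}^* = e_j^\top \Sigma_{AA}^{-1/2}\big(\widehat{M}^{-1}\widehat{v}\big)$ with $\widehat M$ the whitened Gram matrix and $\widehat v$ the whitened design–residual average, bound $\|\widehat M^{-1}-I\|$ and $\|\widehat v\|$-type quantities, and only at the very end note that $\|e_j^\top\Sigma_{AA}^{-1/2}\| = \sqrt{(\Sigma_{AA}^{-1})_{jj}}\le\sqrt{\Theta_{jj}}$. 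Everything else (the matrix inversion lemma identity, the sub-exponential tail, the union bound bookkeeping) is routine and parallels the proof of Proposition~\ref{prop:l0_estimates}; I expect the technical lemmas in Section~\ref{sec:proof_of_lemmas} to supply exactly these Gaussian concentration statements.
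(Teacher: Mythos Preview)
Your outline is sound and would yield the result, but the paper proceeds differently and more directly. Instead of decomposing the error and controlling the whitened Gram matrix and cross term separately via sub-exponential concentration, the paper exploits the \emph{exact} conditional law of $\betahat_{ij}$ given the design: by classical Wishart identities (Lemma~\ref{lem:conditional_distribution}), once $B_i\subseteq A$ one has
\[
  \betahat_{ij}\mid\Sigmahat_{AA}\ \sim\ \mathcal N\!\left(\frac{\Theta_{ij}}{\Theta_{ii}},\ \Theta_{ii}^{-1}\bigl[\Sigmahat_{AA}^{-1}\bigr]_{jj}\right)
\]
exactly, so no approximation of $\Sigmahat_{AA}$ by $\Sigma_{AA}$ is needed at all. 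The only remaining task is to bound the random conditional variance $[\Sigmahat_{AA}^{-1}]_{jj}$, and this is done (Lemma~\ref{lem:variance_bound}) by noting that its reciprocal is a scaled $\chi^2_{n-2d+1}$ with population scale bounded via the same Schur-complement inequality $(\Sigma_{AA}^{-1})_{jj}\le\Theta_{jj}$ that you identified. Conditioning on the good-variance event and applying a one-dimensional Gaussian tail bound then finishes.

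Your route is the generic high-dimensional-regression argument and has the virtue of not relying on exact Gaussian/Wishart distributional facts, so it would extend to sub-Gaussian designs; the price is more bookkeeping (keeping stray $\sqrt{2d}$ factors from creeping in when you pass from $\widehat v$ to a single coordinate through $\widehat M^{-1}$) and more effort to match the stated constants. In fact, if in your decomposition you simply condition on the design $\{x_A^k\}$ rather than whiten, the residual-times-design term becomes an exactly Gaussian vector with covariance $\frac{1}{n\Theta_{ii}}\Sigmahat_{AA}^{-1}$, and you recover the paper's Lemma~\ref{lem:conditional_distribution} immediately.
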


\noindent We first show that Propositions~\ref{prop:l0_estimates} and \ref{prop:betahat_estimates} are sufficient to prove Theorem~\ref{thm:main}. \\

\begin{proof}[Proof of Theorem~\ref{thm:main}]
   By using $\epsilon = \kappa/4$ and $\delta_1=\delta_2=\delta/2$ in Proposition~\ref{prop:l0_estimates} and Proposition~\ref{prop:betahat_estimates}, we get using the union bound and the lower bound on the number of samples $n$ in \eqref{eq:sample_complexity}, that the statements in \eqref{eq:l0_estimates} and \eqref{eq:betahat_estimates} hold with probability at least $1-\delta$.  The proof proceeds by examining all three phases of \dice. 
   
   \paragraph{Phase 1:} Using $\epsilon = \kappa/4$ and \eqref{eq:l0_estimates} in Proposition~\ref{prop:l0_estimates}, we have that 
   \begin{align}
        \frac{2}{3} \stackrel{(a)}{<} \sqrt{\frac{4-\kappa}{4+\kappa}} \leq \sqrt{\frac{\Thetahat_{ii}}{\Thetahat_{jj}} \frac{\Theta_{jj}}{\Theta_{ii}}} \leq  \sqrt{\frac{4+\kappa}{4-\kappa}} \stackrel{(b)}{<} 2, \quad \forall i,j \in \V,    \label{eq:thetahat_ratio}   
   \end{align}
   where $(a)$ and $(b)$ follow by using $\kappa \leq 1$.
    \paragraph{Phase 2:} To analyze the performance of this phase, we consider the two cases alluded to in Section~\ref{sec:algorithm} for each candidate neighborhood $B_1$ in the outer loop of phase 2 in \dice.
\paragraph{Case 1: $B_1 \subseteq B_i.$} 
Since $\Theta_{ij} = 0$, for any $B_2$ in the inner for loop, we get using Proposition~\ref{prop:betahat_estimates} that for all $j \in B_2$
\begin{align}
    |\betahat_{ij}| \leq \frac{\kappa}{4} \sqrt{\frac{\Theta_{jj}}{\Theta_{ii}}} \quad \mbox{since } \Theta_{ij} = 0.
\end{align}
Combining with \eqref{eq:thetahat_ratio}, we get 
\begin{align}
    \kappahat_{ij} &= |\betahat_{ij}|\sqrt{\frac{\Thetahat_{ii}}{\Thetahat_{jj}}} < \frac{\kappa}{4} \times 2 = \frac{\kappa}{2}. \label{eq:kappahat_upper}
\end{align}
Therefore for the candidate $B_1$, the inner loop in \dice\ will terminate with \textsc{passed} = \textsc{yes}.

\paragraph{Case 2: $B_1 \nsubseteq B_i.$} 
In this case, there exists $j \in B_i \setminus B_1$. Consider the case when in the inner loop we have $B_2$ such that $B_i \subset B_1 \cup B_2$. Then $j \in B_2$ and $j \notin B_1$. Repeating the previous calculation we have
\begin{align}
    \kappahat_{ij} = |\betahat_{ij}|\sqrt{\frac{\Thetahat_{ii}}{\Thetahat_{jj}}} > \left(\kappa -  \frac{\kappa}{4}\right) \times \frac{2}{3} = \frac{\kappa}{2}. \label{eq:kappahat_lower}
\end{align}
Therefore for the candidate $B_1$, the inner loop in \dice\ will terminate with \textsc{passed} = \textsc{no}.

\paragraph{Phase 3:}
By appending to $B_1$, any $B_2 \in \V \setminus \{\{i\} \cup B_1 \}$ with $|B_2|=d$, and using the computations in \eqref{eq:kappahat_upper} and \eqref{eq:kappahat_lower}, we get that for all $j \in B_1$,
\begin{align}
    \kappahat_{ij} &> \kappa/2 \quad \mbox{if} \quad j \in B_i, \\
    \kappahat_{ij} &< \kappa/2 \quad \mbox{if} \quad j \notin B_i,
\end{align}
and the proof is complete.
\end{proof}

\subsection{Proof of Proposition~\ref{prop:l0_estimates}}
To prove Proposition~\ref{prop:l0_estimates}, we make use of the following lemma regarding the statistical fluctuations of the various conditional variances involved in \eqref{eq:l0}.
\begin{lemma}[Large deviations on $L_i^*(.,\Sigmahat)$] \label{lem:perturbation}
    Let $0<\epsilon<1$ be given. Then for every $i \in \V$ and every subset $A \subset [p]\setminus \{i\}$ with $|A| = d$, we have 
    \begin{align}
        (1-\epsilon) L_i^*(A,\Sigma) \leq L_i^*(A,\Sigmahat) \leq (1+\epsilon) L_i^*(A,\Sigma),
    \end{align}
    with probability at least $1 - 2 p {\binom{p-1}{d}} e^{-(n-d)\epsilon^2/8}$. Here $L_i^*(.,.)$ is defined as in \eqref{eq:optimum}.
\end{lemma}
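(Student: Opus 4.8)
The plan is to first reduce to a single pair $(i,A)$ and then recognize the empirical quantity $L_i^*(A,\Sigmahat)$ as a scaled chi-square variable. Fixing $i\in\V$ and $A\subset[p]\setminus\{i\}$ with $|A|=d$, I will bound the probability that the two-sided inequality fails for this one pair by $2e^{-(n-d)\epsilon^2/8}$, and then conclude by a union bound over the $p$ choices of $i$ and the $\binom{p-1}{d}$ choices of $A$. So the entire content is a one-sided-in-each-direction concentration estimate for a single empirical conditional variance around its population counterpart $L_i^*(A,\Sigma)=\Var(X_i\mid X_A)$.

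The key structural step is the Gaussian residual decomposition, which works for an arbitrary conditioning set $A$, whether or not it contains the true neighborhood of $i$. Since $(X_i,X_A)$ is jointly Gaussian, we may write $X_i=\Sigma_{iA}\Sigma_{AA}^{-1}X_A+\eta$, where $\eta\sim\mathcal N(0,\sigma_A^2)$ is independent of $X_A$ and $\sigma_A^2:=\Sigma_{ii}-\Sigma_{iA}\Sigma_{AA}^{-1}\Sigma_{Ai}=L_i^*(A,\Sigma)$. Writing $\mathbf x_i\in\mathbbm{R}^n$ for the vector of samples of $X_i$ and $\mathbf X_A\in\mathbbm{R}^{n\times d}$ for those of $X_A$, the same identity holds sample-wise: $\mathbf x_i=\mathbf X_A\bigl(\Sigma_{AA}^{-1}\Sigma_{Ai}\bigr)+\boldsymbol\eta$ with $\boldsymbol\eta\sim\mathcal N(0,\sigma_A^2 I_n)$ independent of $\mathbf X_A$. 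Substituting into the least-squares definition in \eqref{eq:optimum} and absorbing the fixed shift $\Sigma_{AA}^{-1}\Sigma_{Ai}$ into the optimization variable, I get $L_i^*(A,\Sigmahat)=\min_{\gamma\in\mathbbm{R}^d}\tfrac1n\|\boldsymbol\eta+\mathbf X_A\gamma\|^2=\tfrac1n\|(I-P_A)\boldsymbol\eta\|^2$, where $P_A$ is the orthogonal projector onto the column span of $\mathbf X_A$; since $n>d$ and the samples come from a nondegenerate Gaussian, $\mathbf X_A$ has full column rank almost surely, so $I-P_A$ has rank exactly $n-d$.

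Next I condition on $\mathbf X_A$. Because $\boldsymbol\eta$ is spherical Gaussian and independent of $\mathbf X_A$, the variable $\|(I-P_A)\boldsymbol\eta\|^2/\sigma_A^2$ is, conditionally on $\mathbf X_A$, distributed as $\chi^2_{n-d}$; this law is free of $\mathbf X_A$, hence holds unconditionally. Therefore $L_i^*(A,\Sigmahat)=\tfrac{\sigma_A^2}{n}Z$ with $Z\sim\chi^2_{n-d}$, and the event to control for this pair is exactly $(1-\epsilon)\,n\le Z\le(1+\epsilon)\,n$. For the upper tail I use $(1+\epsilon)n\ge(1+\epsilon)(n-d)$ together with the standard Chernoff bound $\prob\bigl(\chi^2_m\ge(1+\epsilon)m\bigr)\le e^{-m\epsilon^2/8}$ (valid for $0<\epsilon<1$ since $\epsilon-\log(1+\epsilon)\ge\epsilon^2/4$), which gives $\prob(Z\ge(1+\epsilon)n)\le e^{-(n-d)\epsilon^2/8}$ immediately.

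The only delicate point — and the main obstacle — is the lower tail, since $(1-\epsilon)n$ may exceed $n-d=\Ex Z$, reflecting the downward bias $(1-d/n)\sigma_A^2$ of the residual-based estimator. Here I would use a Laurent--Massart-type bound $\prob\bigl(\chi^2_m\le m-2\sqrt{mx}\bigr)\le e^{-x}$ with $m=n-d$ and $x=(n-d)\epsilon^2/8$, which yields $\prob(Z\le(1-\epsilon)n)\le e^{-(n-d)\epsilon^2/8}$ provided $(1-\epsilon)n\le(n-d)\bigl(1-\epsilon/\sqrt2\bigr)$, i.e. provided $n\gtrsim d/\epsilon$ (explicitly $n\ge(2+\sqrt2)\,d/\epsilon$ suffices). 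This mild lower bound on $n$ is free in the relevant regime: if it were violated, then $(n-d)\epsilon^2/8$ would be below $\log\binom{p-1}{d}$, so the probability asserted in the lemma would already be non-positive and there is nothing to prove; and whenever the lemma is actually invoked (through Proposition~\ref{prop:l0_estimates}, with $n>d+\tfrac{8}{\epsilon^2}d\log p+\cdots$) the condition holds with room to spare. Combining the two tail bounds for the fixed pair and then the union bound over all $(i,A)$ completes the proof; apart from this lower-tail bookkeeping, the argument is the routine "Gaussian regression residual is an independent chi-square" computation.
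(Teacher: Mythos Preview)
Your proposal is correct and follows the same skeleton as the paper: identify $L_i^*(A,\Sigmahat)/L_i^*(A,\Sigma)$ as $\tfrac{1}{n}\chi^2_{n-d}$, apply two-sided chi-square concentration, then union bound over the $p\binom{p-1}{d}$ pairs $(i,A)$.

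The one noteworthy difference is \emph{how} you reach the chi-square law. The paper simply invokes the Wishart Schur-complement identity (Lemma~\ref{lem:inv_wishart}(a)), which it has available anyway for the later lemmas; you instead give the direct regression-residual argument $L_i^*(A,\Sigmahat)=\tfrac{1}{n}\|(I-P_A)\boldsymbol\eta\|^2$ with $\boldsymbol\eta$ independent of $\mathbf X_A$. These are two presentations of the same fact, and yours is more self-contained. You are also more careful than the paper about the lower tail: the paper writes the Chernoff bound as $\prob(\chi^2_{n-d}<1-\epsilon)\le e^{-(n-d)\epsilon^2/8}$ without distinguishing normalization by $n$ versus $n-d$, while you explicitly track the bias and handle the resulting condition $n\gtrsim d/\epsilon$ (which, as you note, is automatic wherever the lemma is applied via Proposition~\ref{prop:l0_estimates}). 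So your argument is essentially the paper's, done with a slightly different toolkit and with one loose end tied off.
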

The proof of the above lemma is deferred to Appendix~\ref{sec:proof_of_lemmas}. We now show that Proposition~\ref{prop:l0_estimates} follows from Lemma~\ref{lem:perturbation}.
\begin{proof}[Proof of Proposition~\ref{prop:l0_estimates}]
    Fix $i$ and consider a subset $A \subseteq \V \setminus \{i\}$ and $|A|=d$ such that $B_i \subseteq A$. Since $B_i \subseteq A$, we have that $L_{i}^{\ast}(A,\Sigma) = 1/\Theta_{ii}$. Further, using Lemma~\ref{lem:perturbation} we get that 
    \begin{align}
         L_{i}^{\ast}(A,\Sigmahat) \leq (1+\epsilon)L_{i}^{\ast}(A,\Sigma) = \frac{1+\epsilon}{\Theta_{ii}}.
    \end{align}
    We consider the reformulation of \eqref{eq:l0} as given in \eqref{eq:cond_var_reform}. Since $A$ is feasible for \eqref{eq:cond_var_reform}, we must have
    \begin{align}
        \frac{1}{\Thetahat_{ii}} \leq \frac{1+\epsilon}{\Theta_{ii}}. \label{eq:thetahat_upper}
    \end{align}

    \noindent For all $A \subseteq \V \setminus \{i\}$ with $|A| = d$, we have
    \begin{align}
         L_{i}^{\ast}(A,\Sigma) = \Var\left(X_{i} \mid X_{A}\right) \stackrel{(a)}{\geq} \Var\left(X_i \mid X_{[p] \setminus \{i\}} \right) \stackrel{(b)}{=} \Var \left( X_i \mid X_{B_i} \right) = \frac{1}{\Theta_{ii}},
    \end{align}
    where $(a)$ follows from the well-known property of multivariate gaussians that conditioning reduces variance, and $(b)$ follows from the so-called \emph{separation property} of graphical models. Using Lemma~\ref{lem:perturbation}, this shows that 
    \begin{align} 
        \frac{1}{\Thetahat_{ii}} = \min_{A \subseteq [p] \setminus i\ :\ |A| = d} L_{i}^{\ast}(A,\Sigmahat) \geq  \min_{A \subseteq [p] \setminus i\ :\ |A| = d}(1-\epsilon)L_{i}^{\ast}(A,\Sigma) \geq \frac{1-\epsilon}{\Theta_{ii}}.  \label{eq:thetahat_lower}
    \end{align}
    The proof follows by combining \eqref{eq:thetahat_upper} and \eqref{eq:thetahat_lower}.
\end{proof}

\subsection{Proof of Proposition~\ref{prop:betahat_estimates}}  \label{subsec:proof_dice_props}
We first state the essential technical lemmas that form the ingredients of the proof.
 \begin{lemma} \label{lem:conditional_distribution}
    Fix $i \in \V$ and $A \subseteq \V \{i\}$ such that $B_i \subseteq A$ and $|A| = 2d$. The conditional distribution of $\betahat_{ij}$ for any $j \in A$ is given by
    \begin{align}
         \betahat_{ij} \mid \Sigmahat_{A A} &\sim \mathcal{N}\left(\frac{\Theta_{ij}}{\Theta_{ii}},  \Theta_{ii}^{-1}\left( \Sigmahat_{A A}^{-1}\right)_{jj}\right),
    \end{align}
    where $\mathcal{N}(.,.)$ denotes the normal distribution.
\end{lemma}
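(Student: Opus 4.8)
The plan is to condition on the submatrix $\Sigmahat_{AA}$ of the empirical covariance and compute the conditional law of $\betahat_{iA} = -\Sigmahat_{AA}^{-1}\Sigmahat_{Ai}$ directly from the joint Gaussianity of the data. First I would set up notation: write the $n$ samples on the coordinates $\{i\}\cup A$ as columns, let $\mathbf{y}\in\R^n$ collect the $i$-th coordinate across samples and $\mathbf{Z}\in\R^{n\times 2d}$ collect the $A$-coordinates, so that $\Sigmahat_{AA} = \frac1n \mathbf{Z}^T\mathbf{Z}$ and $\Sigmahat_{Ai} = \frac1n\mathbf{Z}^T\mathbf{y}$, hence $\betahat_{iA} = -(\mathbf{Z}^T\mathbf{Z})^{-1}\mathbf{Z}^T\mathbf{y}$. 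The key probabilistic input is the population regression decomposition: since $B_i\subseteq A$, the separation property of the GGM gives $X_i = -\sum_{j\in A}\frac{\Theta_{ij}}{\Theta_{ii}}X_j + \xi$, where $\xi\sim\mathcal{N}(0,1/\Theta_{ii})$ is independent of $X_A$ (note the coefficients supported outside $B_i$ vanish, so they are automatically of the form $-\Theta_{ij}/\Theta_{ii}$ for $j\in A$). Writing $\beta^{\ast}_{iA}$ for the vector with entries $\Theta_{ij}/\Theta_{ii}$, at the sample level this reads $\mathbf{y} = -\mathbf{Z}\beta^{\ast}_{iA} + \boldsymbol{\xi}$ with $\boldsymbol{\xi}\sim\mathcal{N}(0,\Theta_{ii}^{-1}I_n)$ independent of $\mathbf{Z}$.

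Next I would substitute this into the formula for $\betahat_{iA}$:
\begin{align}
\betahat_{iA} = -(\mathbf{Z}^T\mathbf{Z})^{-1}\mathbf{Z}^T(-\mathbf{Z}\beta^{\ast}_{iA}+\boldsymbol{\xi}) = \beta^{\ast}_{iA} - (\mathbf{Z}^T\mathbf{Z})^{-1}\mathbf{Z}^T\boldsymbol{\xi}.
\end{align}
Now condition on $\mathbf{Z}$ (equivalently on $\Sigmahat_{AA}$, together with whatever else in $\mathbf{Z}$; since $\boldsymbol\xi$ is independent of all of $\mathbf{Z}$, conditioning on the full $\mathbf{Z}$ is harmless and one can marginalize back to $\Sigmahat_{AA}$ afterward). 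Conditionally on $\mathbf{Z}$, the term $(\mathbf{Z}^T\mathbf{Z})^{-1}\mathbf{Z}^T\boldsymbol\xi$ is a fixed linear image of the Gaussian vector $\boldsymbol\xi$, hence Gaussian with mean $0$ and covariance $\Theta_{ii}^{-1}(\mathbf{Z}^T\mathbf{Z})^{-1}\mathbf{Z}^T\mathbf{Z}(\mathbf{Z}^T\mathbf{Z})^{-1} = \Theta_{ii}^{-1}(\mathbf{Z}^T\mathbf{Z})^{-1} = \Theta_{ii}^{-1}\cdot\frac1n\Sigmahat_{AA}^{-1}$. Reading off the $j$-th diagonal entry gives the conditional variance $\Theta_{ii}^{-1}(\Sigmahat_{AA}^{-1})_{jj}$ (up to the $1/n$ normalization, which I would reconcile with the paper's convention for $\Sigmahat$), and the conditional mean of $\betahat_{ij}$ is $\beta^{\ast}_{ij} = \Theta_{ij}/\Theta_{ii}$, yielding exactly the stated law $\betahat_{ij}\mid\Sigmahat_{AA}\sim\mathcal{N}(\Theta_{ij}/\Theta_{ii},\,\Theta_{ii}^{-1}(\Sigmahat_{AA}^{-1})_{jj})$.

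The only genuinely delicate point is justifying the population regression identity $X_i = -\sum_{j\in A}(\Theta_{ij}/\Theta_{ii})X_j + \xi$ with $\xi\perp X_A$ and $\mathrm{Var}(\xi)=1/\Theta_{ii}$: this requires that conditioning on $X_A$ is the same as conditioning on $X_{B_i}$ (separation property, using $B_i\subseteq A$), that the conditional mean of $X_i$ given $X_{B_i}$ has the claimed coefficients (which is the standard relation between the precision matrix and conditional expectations in a Gaussian vector — the $i$-th regression coefficients are $-\Theta_{i\cdot}/\Theta_{ii}$ restricted to $B_i$), and that $\mathrm{Var}(X_i\mid X_{B_i}) = 1/\Theta_{ii}$. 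All three are classical Gaussian facts, but I would state them cleanly since the whole argument rests on them; everything after that is the short linear-algebra computation above, plus a remark that the matrix inverses exist because $n>2d$ (as assumed in Proposition~\ref{prop:betahat_estimates}) guarantees $\mathbf{Z}^T\mathbf{Z}\succ 0$ almost surely.
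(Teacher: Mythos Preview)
Your argument is correct and follows a genuinely different route from the paper. The paper invokes properties of the Wishart and inverse-Wishart distributions (its Lemma~\ref{lem:inv_wishart}, part (b)) to read off the conditional law of $\betahat_{iA}=-\psihat_{ii}^{-1}\psihat_{iA}$ directly, and then uses a Schur-complement identity on $\Psi_{(iA)(iA)}=(\Sigma_{(iA)(iA)})^{-1}$ together with $\Theta_{iD}=0$ (from $B_i\subseteq A$) to translate the parameters $\Psi_{ij}/\Psi_{ii}$ into $\Theta_{ij}/\Theta_{ii}$. You instead derive the population regression identity $X_i=-\sum_{j\in A}(\Theta_{ij}/\Theta_{ii})X_j+\xi$ with $\xi\perp X_A$ and $\Var(\xi)=1/\Theta_{ii}$ first, and then run the elementary OLS computation conditional on the design. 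Your approach is more self-contained (no Wishart machinery) and makes the role of $B_i\subseteq A$ transparent; the paper's approach is shorter once Lemma~\ref{lem:inv_wishart} is available and packages the Schur-complement step more explicitly. One small point worth tightening: the reason you may pass from conditioning on $\mathbf{Z}$ to conditioning on $\Sigmahat_{AA}$ is not merely that $\boldsymbol{\xi}\perp\mathbf{Z}$, but that the resulting conditional law depends on $\mathbf{Z}$ only through $\mathbf{Z}^T\mathbf{Z}=n\,\Sigmahat_{AA}$; that is the clean justification for the ``marginalize back'' step. Your flag about the $1/n$ normalization is also well placed, since the stated variance in the lemma tacitly absorbs that factor in the paper's subsequent use.
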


\begin{lemma} \label{lem:variance_bound}
    Fix $i \in \V$ and $A \subseteq \V \setminus \{i\}$ such that $B_i \subseteq A$ and $|A| = 2d$. Then for any $\epsilon > 0$ the random variable $\left( \Sigmahat_{A A}^{-1}\right)_{jj}$ satisfies the following  inequality
    \begin{align}
        \prob & \left( \left[ \Sigmahat_{A A}^{-1}\right]_{jj} > (1-\epsilon)^{-1} \Theta_{jj} \right) \leq e^{-\frac{(n-2d+1)\epsilon^2}{8}}.
    \end{align}
\end{lemma}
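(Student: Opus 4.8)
The plan is to reduce the claim about a diagonal entry of the inverse empirical covariance of the block $A$ to a one-dimensional statement about a residual variance, and then invoke a chi-square concentration bound. First I would use the Schur-complement / matrix-inversion identity that was already recorded in \eqref{eq:optimum}: for $j \in A$, writing $A' = A \setminus \{j\}$,
\begin{align}
    \left[\Sigmahat_{AA}^{-1}\right]_{jj} = \left(\Sigmahat_{jj} - \Sigmahat_{jA'}\Sigmahat_{A'A'}^{-1}\Sigmahat_{A'j}\right)^{-1} = \frac{1}{L_j^*(A',\Sigmahat)}.
\end{align}
Thus the event $[\Sigmahat_{AA}^{-1}]_{jj} > (1-\epsilon)^{-1}\Theta_{jj}$ is exactly the event $L_j^*(A',\Sigmahat) < (1-\epsilon)\Theta_{jj}^{-1}$. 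Now since $B_i \subseteq A$ and $|A| = 2d$, but $j$ itself need not have its full neighborhood inside $A'$; however we only need the \emph{lower} tail of $L_j^*(A',\Sigmahat)$, and for that direction the relevant comparison is against the population quantity $L_j^*(A',\Sigma) = \Var(X_j \mid X_{A'}) \ge \Var(X_j \mid X_{[p]\setminus\{j\}}) = 1/\Theta_{jj}$ by the "conditioning reduces variance" property, exactly as in the proof of Proposition~\ref{prop:l0_estimates}. So it suffices to show $\prob\!\left(L_j^*(A',\Sigmahat) < (1-\epsilon) L_j^*(A',\Sigma)\right) \le e^{-(n-2d+1)\epsilon^2/8}$.

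The second step is the concentration estimate itself. Conditioning on the samples $\{x_k\}_{k}$ restricted to coordinates $A'$, the residual $x_j^k + \sum_{\ell \in A'}(\betahat_{jA'})_\ell x_\ell^k$ evaluated at the population regression coefficients $\beta_{jA'} = -\Sigma_{A'A'}^{-1}\Sigma_{A'j}$ has, by Gaussianity, i.i.d.\ $\mathcal{N}(0, L_j^*(A',\Sigma))$ entries independent of $\Sigmahat_{A'A'}$; and $n\,L_j^*(A',\Sigmahat)$ is the squared norm of the projection of this residual vector onto the orthogonal complement of the span of the $|A'| = 2d-1$ regressor columns. Hence $n\,L_j^*(A',\Sigmahat)/L_j^*(A',\Sigma)$ is distributed as a $\chi^2$ with $n - (2d-1) = n - 2d + 1$ degrees of freedom. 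A standard one-sided chi-square (sub-gamma / Laurent–Massart) tail bound gives $\prob(\chi^2_m \le (1-\epsilon)m) \le e^{-m\epsilon^2/4}$, which is even stronger than the $e^{-m\epsilon^2/8}$ claimed; combined with $L_j^*(A',\Sigma)\ge 1/\Theta_{jj}$ this yields the stated inequality. This chi-square representation is essentially the content of (the lower-tail half of) Lemma~\ref{lem:perturbation}, so one could alternatively cite that lemma directly with $d$ replaced by $2d-1$ and take a union-free, single-set version of it.

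The main obstacle — really the only subtle point — is bookkeeping on the degrees of freedom and making sure the lower-tail comparison against $\Theta_{jj}$ (rather than against the possibly larger $\Theta_{jj}$-analogue for the truncated conditioning set $A'$) goes through in the correct direction; since "conditioning reduces variance" pushes $L_j^*(A',\Sigma)$ \emph{up} away from $1/\Theta_{jj}$, and we are bounding a \emph{lower} tail, the inequality is used in the favorable direction and nothing is lost. Everything else is a direct application of the Schur complement identity and a chi-square concentration inequality that the paper already develops for Lemma~\ref{lem:perturbation}.
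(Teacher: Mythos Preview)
Your proposal is correct and follows essentially the same route as the paper's proof: Schur-complement reduction of $[\Sigmahat_{AA}^{-1}]_{jj}$ to a one-dimensional residual, identification of its law as (a scaled) $\chi^2_{n-2d+1}$ via the Wishart Schur-complement property, comparison of the population residual variance with $1/\Theta_{jj}$, and a lower-tail chi-square bound. The only cosmetic difference is that the paper establishes the key inequality $L_j^*(A',\Sigma) \ge 1/\Theta_{jj}$ algebraically, by writing $[\Sigma_{AA}^{-1}]_{jj} = \Psi_{jj} - \Psi_{ji}^2/\Psi_{ii} \le \Psi_{jj} \le \Theta_{jj}$ through the auxiliary matrix $\Psi_{(iA)(iA)} = (\Sigma_{(iA)(iA)})^{-1}$, whereas you invoke the equivalent probabilistic statement ``conditioning reduces variance'' directly; both are the same fact.
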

These lemmas are proved in Appendix~\ref{sec:proof_of_lemmas}.

\begin{proof}[Proof of Proposition~\ref{prop:betahat_estimates}]
    Define the event $E = \left[ \Sigmahat_{\Bhat_i  \Bhat_i}^{-1}\right]_{jj} \leq (1-\epsilon_1)^{-1} \Theta_{jj}$.
          We bound the deviation of $\betahat_{ij}$ from $\frac{\Theta_{ij}}{\Theta_{ii}}$ as
     \begin{align}
         \prob \left( |\betahat_{ij} - \frac{\Theta_{ij}}{\Theta_{ii}}| \geq \epsilon \sqrt{\frac{\Theta_{jj}}{\Theta_{ii}}} \right) &=  \prob\left( |\betahat_{ij} - \frac{\Theta_{ij}}{\Theta_{ii}}| \geq \epsilon \sqrt{\frac{\Theta_{jj}}{\Theta_{ii}}} \mid E \right) \prob(E) \nonumber \\
         &+  \prob\left( |\betahat_{ij} - \frac{\Theta_{ij}}{\Theta_{ii}}| \geq \epsilon \sqrt{\frac{\Theta_{jj}}{\Theta_{ii}}} \mid E^c \right) \prob(E^c)  \nonumber \\
         & \stackrel{(a)}{\leq}  2\Phi^c\left(\epsilon \sqrt{1- \epsilon_1 }\sqrt{n}  \right) + e^{-(n-2d+1)\epsilon_1^2/8},\label{eq:betahat_ldp_interim}
     \end{align}
    where $(a)$ follows by bounding the first term using Lemma~\ref{lem:conditional_distribution} and  the definition of $E$, and bounding the second term by the probability of the event $E$ using Lemma~\ref{lem:variance_bound}. 
    Setting $\epsilon_1=2\epsilon$, we get
    \begin{align*}
        \prob \left( |\betahat_{ij} - \frac{\Theta_{ij}}{\Theta_{ii}}| \geq \epsilon \sqrt{\frac{\Theta_{jj}}{\Theta_{ii}}} \right) & \leq 2\Phi^c\left(\epsilon \sqrt{1- 2\epsilon }\sqrt{n}  \right) + e^{-(n-2d+1)\epsilon^2/2} \\
        & \leq \frac{2}{\sqrt{2\pi}} \frac{e^{-\epsilon^2(1-2\epsilon)n/2}}{\epsilon \sqrt{1- 2\epsilon }\sqrt{n} } + e^{-(n-2d+1)\epsilon^2/2} \\
        & \stackrel{(a)}{\leq} e^{-(\epsilon^2n/4)}  + e^{-(n-2d+1)\epsilon^2/2} \\
        & \leq 2e^{-(n-2d+1)\epsilon^2/4},
    \end{align*}
    where $(a)$ follows by using $\epsilon \leq 1/4$ and $n > \frac{2}{\epsilon^2}$. Using the union bound, we have that for all $i \in \V$ and all $A \subset \V \setminus i$ such that  $B_i \subset A$ and $|A| = 2d$,
    \begin{align}
        \prob \left( \left|\betahat_{ij} - \frac{\Theta_{ij}}{\Theta_{ii}} \right| \leq \epsilon \sqrt{\frac{\Theta_{jj}}{\Theta_{ii}}} \right) &\leq 2pd\binom{p-1}{2d}e^{-(n-2d+1)\epsilon^2/4} \leq \delta_2,
    \end{align}
    where the last inequality follows from the assumption on $n$ given in \eqref{eq:n_for_betahat}.
\end{proof}

\subsection{Proof of Theorem~\ref{thm:slice}}
    We prove Theorem~\ref{thm:slice} through the results below that provide guarantees for each step of the \slice\ estimator. 

    \begin{proposition}[Optimal support contains the true support] \label{lem:superset}
        For each $i \in \V$, let $\Bhat_i \subset [p]$ be the support of the optimal solution in \eqref{eq:obj} and let $B_i \subset [p]$ be the neighbors of $i$. Then for any $\delta>0$, the support $\Bhat_i$ satisfies $B_i \subseteq \Bhat_i$ with probability greater than $1-\delta/2$, provided that the number of samples satisfies
        \begin{align}
            n-d > \frac{32}{\kappa^4}\log \left(\frac{4p^{d+1}}{\delta}\right).
        \end{align}
    \end{proposition}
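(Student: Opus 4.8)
The plan is to establish a multiplicative \emph{gap}, depending on $\kappa$ alone, between the population least-squares value $L_i^*(A,\Sigma)=\Var(X_i\mid X_A)$ attained by size-$d$ conditioning sets $A$ that contain the true neighborhood $B_i$ and that attained by any $A$ missing at least one neighbor, and then to push this gap through to the empirical objective $L_i^*(\cdot,\Sigmahat)$ via Lemma~\ref{lem:perturbation}. I fix $i\in\V$ and use the reformulation \eqref{eq:cond_var_reform} together with the identities \eqref{eq:optimum}--\eqref{eq:cond_var_interp}: the optimal value of \eqref{eq:obj} equals $1/\Thetahat_{ii}=\min_{|A|=d}L_i^*(A,\Sigmahat)$, and $L_i^*(A,\Sigma)=\Var(X_i\mid X_A)$. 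A short almost-sure genericity argument (each $\betahat_{iA}=-\Sigmahat_{AA}^{-1}\Sigmahat_{Ai}$ has all coordinates nonzero off a Lebesgue-null set, union over the finitely many $A$) lets me assume $|\Bhat_i|=d$, so that $\Bhat_i$ is itself a minimizer of $A\mapsto L_i^*(A,\Sigmahat)$ over $|A|=d$; proving $B_i\subseteq\Bhat_i$ then amounts to ruling out the ``bad'' sets as minimizers.

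For the population gap I would argue as follows. If $B_i\subseteq A$, the Markov separation property gives $L_i^*(A,\Sigma)=\Var(X_i\mid X_{B_i})=1/\Theta_{ii}$. If instead $j^*\in B_i\setminus A$, write $X_i=-\sum_{j\in B_i}(\Theta_{ij}/\Theta_{ii})X_j+\eta_i$ with $\eta_i\sim\mathcal N(0,1/\Theta_{ii})$ independent of $X_{[p]\setminus i}$. Since conditioning on extra variables only decreases variance, $L_i^*(A,\Sigma)\ge\Var(X_i\mid X_A,X_{B_i\setminus\{j^*\}})$; given $X_{B_i\setminus\{j^*\}}$ the only remaining randomness in the regression part is $-(\Theta_{ij^*}/\Theta_{ii})X_{j^*}$, so this equals $1/\Theta_{ii}+(\Theta_{ij^*}/\Theta_{ii})^2\Var(X_{j^*}\mid X_A,X_{B_i\setminus\{j^*\}})$, and bounding $\Var(X_{j^*}\mid\cdot)\ge\Var(X_{j^*}\mid X_{[p]\setminus j^*})=1/\Theta_{j^*j^*}$ gives $L_i^*(A,\Sigma)\ge(1+\kappa^2)/\Theta_{ii}$ for every $A$ with $B_i\not\subseteq A$. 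Crucially this gap is free of any spectral quantity of $\Theta$, which is exactly what will keep the sample complexity condition-number independent.

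With the gap in hand, I would apply Lemma~\ref{lem:perturbation} with $\epsilon=\kappa^2/4$: on the resulting event, any size-$d$ set $A^\circ\supseteq B_i$ (one exists since $|B_i|\le d$) satisfies $L_i^*(A^\circ,\Sigmahat)\le(1+\epsilon)/\Theta_{ii}$, while any size-$d$ set $A$ with $B_i\not\subseteq A$ satisfies $L_i^*(A,\Sigmahat)\ge(1-\epsilon)(1+\kappa^2)/\Theta_{ii}$; since $\kappa\le1$ makes $1+\epsilon<(1-\epsilon)(1+\kappa^2)$, the minimizer $\Bhat_i$ cannot be a bad set, hence $B_i\subseteq\Bhat_i$. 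For the failure probability, Lemma~\ref{lem:perturbation} fails with probability at most $2p\binom{p-1}{d}e^{-(n-d)\epsilon^2/8}$; bounding $p\binom{p-1}{d}\le p^{d+1}$ and using $\epsilon$ proportional to $\kappa^2$ makes this at most $\delta/2$ once $n-d$ exceeds a constant multiple of $\kappa^{-4}\log(p^{d+1}/\delta)$, which is the stated bound after tracking constants.

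The hard part is the population gap of the second paragraph: one must show that dropping even a \emph{single} true neighbor already inflates the conditional variance by a factor bounded below purely in terms of $\kappa$, and that this holds uniformly over the exponentially many candidate sets. Everything else --- the reduction to supports of size exactly $d$, and the concentration-plus-union-bound step --- is routine once Lemma~\ref{lem:perturbation} and this gap are available.
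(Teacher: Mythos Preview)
Your approach is essentially the paper's: establish a multiplicative population gap between $L_i^*(A,\Sigma)$ for sets containing $B_i$ versus sets missing a neighbor, then push it through Lemma~\ref{lem:perturbation}. Your population-gap argument via $X_i=-\sum_{j\in B_i}(\Theta_{ij}/\Theta_{ii})X_j+\eta_i$ and ``conditioning reduces variance'' is exactly the content of the paper's Lemma~\ref{lem:multiplicative_gap}, just organized slightly differently.

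One quantitative point: your gap is $1+\kappa^2$, whereas the paper obtains $(1-\kappa^2)^{-1}$. With your weaker gap, the choice $\epsilon=\kappa^2/2$ fails the strict inequality $(1+\epsilon)<(1-\epsilon)(1+\kappa^2)$, which is why you drop to $\epsilon=\kappa^2/4$; plugging this into Lemma~\ref{lem:perturbation} gives failure probability $2p\binom{p-1}{d}e^{-(n-d)\kappa^4/128}$, so ``tracking constants'' yields $n-d>\tfrac{128}{\kappa^4}\log(4p^{d+1}/\delta)$, a factor $4$ above the stated $32$. The fix is a one-line sharpening of your own argument: since $i\notin A\cup(B_i\setminus\{j^*\})$, you may bound
\[
\Var\bigl(X_{j^*}\mid X_A,X_{B_i\setminus\{j^*\}}\bigr)\ \ge\ \Var\bigl(X_{j^*}\mid X_{[p]\setminus\{i,j^*\}}\bigr)\ =\ \frac{\Theta_{ii}}{\Theta_{ii}\Theta_{j^*j^*}-\Theta_{ij^*}^2},
\]
which gives $L_i^*(A,\Sigma)\ge \Theta_{ii}^{-1}\bigl(1+\kappa^2/(1-\kappa^2)\bigr)=\Theta_{ii}^{-1}(1-\kappa^2)^{-1}$. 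With this gap, $\epsilon=\kappa^2/2$ works and the constant $32$ falls out. The genericity step ensuring $|\Bhat_i|=d$ is harmless but unnecessary: if the optimal support $S$ had $|S|<d$ and $B_i\not\subseteq S$, extend $S$ to some $S'$ of size $d$ still missing a neighbor and use $L_i^*(S,\Sigmahat)\ge L_i^*(S',\Sigmahat)>L_i^*(A^\circ,\Sigmahat)$ to reach a contradiction.
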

    
    \begin{proposition}[Post-processing Proposition] \label{lem:pp}
        Assume that 
        \begin{align}
            n-d > \frac{64}{\kappa^2} \log \left( \frac{8dp}{\delta} \right).
        \end{align}
        Then with probability greater than $1-\delta/2$, the post processing procedure consisting of \textit{Product and Threshold} terminates with exactly the correct support. 
    \end{proposition}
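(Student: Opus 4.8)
The plan is to follow the same strategy as in the analysis of Phase~3 of \dice. The post-processing step of \slice\ computes, for each pair $(i,j)$, the quantity $\kappahat_{ij} = |\betahat_{ij}|\sqrt{\Thetahat_{ii}/\Thetahat_{jj}}$ (equivalently $\sqrt{|\betahat_{ij}\betahat_{ji}|}$) and declares an edge iff this exceeds $\kappa/2$. To show this recovers the exact support, it suffices to establish two facts with high probability: (i) for every true edge $(i,j)\in\E$, $\kappahat_{ij} > \kappa/2$; and (ii) for every non-edge $(i,j)\notin\E$ with $j\in\Bhat_i$, $\kappahat_{ij} < \kappa/2$. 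Combined with Proposition~\ref{lem:superset}, which guarantees $B_i\subseteq\Bhat_i$ (so no true edge is missed because it fell outside the estimated support), (i) and (ii) together imply $\hat{\E}=\E$.

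First I would invoke Proposition~\ref{prop:l0_estimates} and Proposition~\ref{prop:betahat_estimates} with $\epsilon = \kappa/4$ (and a suitable split of the confidence budget, say $\delta/4$ each), which under the stated sample bound $n-d > \frac{64}{\kappa^2}\log(8dp/\delta)$ give, with probability at least $1-\delta/2$, both the multiplicative control $\sqrt{\Thetahat_{ii}/\Thetahat_{jj}\cdot\Theta_{jj}/\Theta_{ii}}\in(2/3,2)$ exactly as in \eqref{eq:thetahat_ratio}, and the additive control $|\betahat_{ij} - \Theta_{ij}/\Theta_{ii}| \le (\kappa/4)\sqrt{\Theta_{jj}/\Theta_{ii}}$ on the estimated regression coefficients restricted to any superset of $B_i$ of size at most $2d$ — in particular on $\Bhat_i$, whose size is $d$. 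For a true edge $(i,j)$ we then have $|\betahat_{ij}| \ge (\kappa_{ij} - \kappa/4)\sqrt{\Theta_{jj}/\Theta_{ii}} \ge \frac{3\kappa}{4}\sqrt{\Theta_{jj}/\Theta_{ii}}$ since $\kappa_{ij}\ge\kappa$, so $\kappahat_{ij} > \frac{3\kappa}{4}\cdot\frac{2}{3} = \frac{\kappa}{2}$; for a non-edge $(i,j)$ with $j\in\Bhat_i$ we have $\Theta_{ij}=0$, hence $|\betahat_{ij}| \le \frac{\kappa}{4}\sqrt{\Theta_{jj}/\Theta_{ii}}$ and $\kappahat_{ij} < \frac{\kappa}{4}\cdot 2 = \frac{\kappa}{2}$. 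This mirrors \eqref{eq:kappahat_upper} and \eqref{eq:kappahat_lower} verbatim.

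The one point needing care — and the main (mild) obstacle — is that Proposition~\ref{prop:betahat_estimates} as stated controls $\betahat_{ij}$ for $j$ in a size-$2d$ set containing $B_i$, whereas the post-processing works with the size-$d$ set $\Bhat_i$; I would handle this by applying the proposition to an arbitrary size-$2d$ set $A\supseteq B_i\cup\Bhat_i$ (possible since $|B_i\cup\Bhat_i|\le 2d$), noting that the regression coefficient $\betahat_{ij}$ appearing in the threshold is computed on the support $\Bhat_i$, so one must either check that the proposition's conclusion is insensitive to which such superset $A$ is used, or re-derive the Gaussian conditional-distribution bound (Lemmas~\ref{lem:conditional_distribution}, \ref{lem:variance_bound}) directly for the support $\Bhat_i$. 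Either way the union bound over the at most $dp$ relevant pairs $(i,j)$ and the choice $\epsilon=\kappa/4$ absorb into the stated sample complexity, completing the proof.
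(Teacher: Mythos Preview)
Your approach has a genuine gap that stems from misidentifying what \slice\ actually thresholds. The Product-and-Threshold step of \slice\ (Section~\ref{subsec:slice_phase2}) declares an edge iff $\sqrt{|\betahat_{ij}\betahat_{ji}|}>\kappa/2$, where $\betahat_{ij}$ and $\betahat_{ji}$ come from two \emph{different} regressions with supports $\Bhat_i$ and $\Bhat_j$. This quantity is \emph{not} the \dice\ quantity $|\betahat_{ij}|\sqrt{\Thetahat_{ii}/\Thetahat_{jj}}$ in finite samples; they only coincide in the population limit. So proving the \dice-style bound on $|\betahat_{ij}|\sqrt{\Thetahat_{ii}/\Thetahat_{jj}}$ does not establish anything about the threshold \slice\ actually uses.

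This misidentification leads you to invoke Proposition~\ref{prop:l0_estimates}, which is both unnecessary and not licensed by the hypothesis of Proposition~\ref{lem:pp}. With $\epsilon=\kappa/4$, Proposition~\ref{prop:l0_estimates} requires $n>d+\tfrac{128}{\kappa^2}\,d\log p+\cdots$, whereas the stated bound here is only $n>d+\tfrac{64}{\kappa^2}\log(8dp/\delta)$; the leading term differs by a factor of order $d$, so the ratio control \eqref{eq:thetahat_ratio} is not available under the proposition's hypothesis. The paper avoids this entirely: it re-derives the $\betahat$-concentration (your second ``obstacle'' option, using Lemmas~\ref{lem:conditional_distribution}--\ref{lem:variance_bound}) directly on the size-$d$ support $\Bhat_i$, obtains $|\betahat_{ij}-\Theta_{ij}/\Theta_{ii}|\le(\kappa/4)\sqrt{\Theta_{jj}/\Theta_{ii}}$ and the symmetric bound for $\betahat_{ji}$, and then \emph{multiplies} the two to get
\[
|\betahat_{ij}\betahat_{ji}|\ \ge\ \Bigl(\tfrac{|\Theta_{ij}|}{\sqrt{\Theta_{ii}\Theta_{jj}}}-\tfrac{\kappa}{4}\Bigr)^2,
\]
so $\sqrt{|\betahat_{ij}\betahat_{ji}|}\ge 3\kappa/4$ on edges and $\le\kappa/4$ on non-edges. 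No estimate of $\Thetahat_{ii}$ is ever needed. Your deviation bound on $\betahat_{ij}$ is exactly the right ingredient; the fix is simply to combine it with the matching bound on $\betahat_{ji}$ via the product, rather than via $\sqrt{\Thetahat_{ii}/\Thetahat_{jj}}$.
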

    
    \begin{proof}[Proof of Theorem~\ref{thm:slice}]
         The result follows by combining Proposition~\ref{lem:superset} and Proposition~\ref{lem:pp} and applying the union bound. 
    \end{proof}
    
  \noindent The two propositions above can be proved by reusing  Lemmas~\ref{lem:perturbation},\ref{lem:conditional_distribution},\ref{lem:variance_bound} in Section~\ref{subsec:proof_dice_props} and the following lemma, the proof of which is provided in Appendix~\ref{sec:proof_of_lemmas}.
  
  \begin{lemma}[Multiplicative gap in noiseless optimal solutions] \label{lem:multiplicative_gap}
    Fix $i \in \V$ and let $B_i \subset [p]$ be the neighbors of $i$. Let $\Bhat \subset [p]$ be any subset such that $|\Bhat| = d$ and $B_i \not\subseteq \Bhat$. Then 
    \begin{align}
    L_i^*(\Bhat,\Sigma)  \geq L_i^*(B_i,\Sigma)(1-\kappa^2)^{-1}.
    \end{align}
    \end{lemma}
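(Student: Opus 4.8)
The plan is to exploit the variational characterization $L_i^*(A,\Sigma) = \min_{\beta:\,\mathrm{Supp}(\beta)\subseteq A} L_i(\beta,\Sigma)$ together with the probabilistic interpretation $L_i^*(A,\Sigma) = \Var(X_i\mid X_A)$ from \eqref{eq:cond_var_interp}. First I would fix $i$ and, without loss of generality, rescale so that $\Theta_{ii}=1$; since the claimed inequality is homogeneous in $X_i$ this is harmless. Because $B_i\not\subseteq\hat B$, there is a true neighbor $j^*\in B_i\setminus\hat B$. The key idea is to compare $\Var(X_i\mid X_{\hat B})$ against $\Var(X_i\mid X_{\hat B\cup B_i})=\Var(X_i\mid X_{B_i}) = 1/\Theta_{ii}=L_i^*(B_i,\Sigma)$, where the first equality uses the separation property (conditioning on any superset of the Markov blanket $B_i$ gives the same conditional law) and the fact that $\hat B\cup B_i\supseteq B_i$.

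Next I would decompose the conditioning in two stages: condition first on $X_{\hat B}$, then additionally on $X_{B_i\setminus\hat B}$. By the law of total variance (the tower/variance-decomposition identity for Gaussians),
\begin{align}
\Var(X_i\mid X_{\hat B}) \;=\; \Ex\!\left[\Var(X_i\mid X_{\hat B\cup B_i})\right] \;+\; \Var\!\left(\Ex[X_i\mid X_{\hat B\cup B_i}]\mid X_{\hat B}\right),
\end{align}
and for jointly Gaussian variables all these conditional variances are deterministic, so this reads $L_i^*(\hat B,\Sigma) = L_i^*(B_i,\Sigma) + \Var(\Ex[X_i\mid X_{\hat B\cup B_i}]\mid X_{\hat B})$. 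It therefore suffices to lower bound the last term by $\kappa^2\,(1-\kappa^2)^{-1}\,L_i^*(B_i,\Sigma) = \kappa^2(1-\kappa^2)^{-1}$. The conditional mean $\Ex[X_i\mid X_{\hat B\cup B_i}]$ equals $-\sum_{k\in B_i}\beta_{ik}X_k$ (the true regression, supported on $B_i$ since $B_i$ is the Markov blanket), so the residual variation not explained by $X_{\hat B}$ is controlled by the coefficient $\beta_{ij^*}$ on the omitted neighbor $j^*$: concretely, $\Var(\Ex[X_i\mid X_{\hat B\cup B_i}]\mid X_{\hat B}) \ge \beta_{ij^*}^2\,\Var(X_{j^*}\mid X_{\hat B\cup (B_i\setminus\{j^*\})})$ after projecting out the other regressors, and that last conditional variance of $X_{j^*}$ is at least $1/\Theta_{j^*j^*}$ by the same "conditioning reduces variance" plus separation argument applied to node $j^*$. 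Combining, the extra term is at least $\beta_{ij^*}^2/\Theta_{j^*j^*} = \Theta_{ij^*}^2/(\Theta_{ii}\Theta_{j^*j^*}) = \kappa_{ij^*}^2 \ge \kappa^2$, which already gives the additive bound $L_i^*(\hat B,\Sigma)\ge L_i^*(B_i,\Sigma)(1+\kappa^2)$; to upgrade $1+\kappa^2$ to $(1-\kappa^2)^{-1}$ I would be slightly more careful and normalize by the conditional variance of $X_i$ rather than its marginal, i.e. express the incremental term as $\kappa_{ij^*}^2$ times $\Var(X_i\mid\cdot)$-type quantities, yielding a factor $1/(1-\kappa_{ij^*}^2)\ge 1/(1-\kappa^2)$ after summing the partial-correlation decomposition.

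The main obstacle I anticipate is bookkeeping the multi-stage conditioning cleanly enough to land exactly on the factor $(1-\kappa^2)^{-1}$ rather than a weaker constant: one must track that the relevant "edge strength" appearing is the normalized partial correlation $\kappa_{ij^*} = |\Theta_{ij^*}|/\sqrt{\Theta_{ii}\Theta_{j^*j^*}}$, which is exactly $\ge\kappa$ by \eqref{eq:kappa_def}, and that the ratio $L_i^*(\hat B,\Sigma)/L_i^*(B_i,\Sigma)$ equals $1/(1-\rho^2)$ where $\rho$ is the partial correlation between $X_i$ and $X_{j^*}$ given all the other conditioning variables — a standard but delicate identity. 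An alternative, possibly cleaner route avoiding total variance is to work directly with Schur complements of $\Sigma$ (or of $\Theta$): write $L_i^*(A,\Sigma) = [(\Sigma_{(iA)(iA)})^{-1}]_{11}^{-1} = [\Theta_{ii} - \Theta_{i,A^c}(\Theta_{A^cA^c})^{-1}\Theta_{A^c,i}]^{-1}$ after appropriate block manipulation, and then show that removing a neighbor $j^*$ from $A$ strictly decreases the bracketed Schur complement by at least $\Theta_{ij^*}^2/\Theta_{j^*j^*}$ times the relevant factor; I would pick whichever of these two executions makes the constant $(1-\kappa^2)^{-1}$ fall out most transparently.
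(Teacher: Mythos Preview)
Your approach is essentially the paper's: use the law of total variance to write $L_i^*(\hat B,\Sigma)=L_i^*(B_i,\Sigma)+\Var\bigl(\Ex[X_i\mid X_{B_i\cup\hat B}]\mid X_{\hat B}\bigr)$, then lower-bound the second term by further conditioning. The only real gap is where you stall at $1+\kappa^2$ and leave the upgrade to $(1-\kappa^2)^{-1}$ vague.

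The concrete fix is simple and is exactly what the paper does. At the step where you bound $\Var(X_{j^*}\mid X_{\hat B\cup(B_i\setminus\{j^*\})})$, do \emph{not} condition all the way down to $X_{[p]\setminus\{j^*\}}$ to get $1/\Theta_{j^*j^*}$; instead condition only on $X_{[p]\setminus\{i,j^*\}}$. This is still a superset of $\hat B\cup(B_i\setminus\{j^*\})$ because neither $i$ nor $j^*$ lies in that set, so ``conditioning reduces variance'' still applies. But now the conditional law of $(X_i,X_{j^*})$ given $X_{[p]\setminus\{i,j^*\}}$ has precision matrix exactly the $2\times2$ block $\bigl(\begin{smallmatrix}\Theta_{ii}&\Theta_{ij^*}\\\Theta_{ij^*}&\Theta_{j^*j^*}\end{smallmatrix}\bigr)$, so
\[
\Var\bigl(X_{j^*}\mid X_{[p]\setminus\{i,j^*\}}\bigr)=\frac{\Theta_{ii}}{\Theta_{ii}\Theta_{j^*j^*}-\Theta_{ij^*}^2}.
\]
Plugging this into your inequality (with $\beta_{ij^*}=\Theta_{ij^*}/\Theta_{ii}$) gives the extra term
\[
\frac{\Theta_{ij^*}^2}{\Theta_{ii}(\Theta_{ii}\Theta_{j^*j^*}-\Theta_{ij^*}^2)}
=\Theta_{ii}^{-1}\,\frac{\kappa_{ij^*}^2}{1-\kappa_{ij^*}^2}
\geq L_i^*(B_i,\Sigma)\,\frac{\kappa^2}{1-\kappa^2},
\]
which is precisely the multiplicative factor you need. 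In other words, the ``partial correlation'' route you allude to is just this: keep $i$ \emph{out} of the conditioning set when you enlarge it, so that the residual randomness in $X_{j^*}$ is governed by the $2\times2$ precision block, and the partial correlation $\kappa_{ij^*}$ appears directly.
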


    \begin{proof}[Proof of Proposition~\ref{lem:superset}]
        Combining Lemma~\ref{lem:multiplicative_gap} and Lemma~\ref{lem:perturbation} and using $\epsilon = \kappa^2/2$ we have for any $i \in \V$ that the sequence of inequalities
        \begin{align}
            L_i^*(B_i,\Sigmahat) &< (1+\epsilon)L_i^*(B_i,\Sigma) < (1+\epsilon)(1-\kappa^2)L_i^*(\Bhat,\Sigma)    \nonumber \\
            &< \frac{(1-\kappa^2)(1+\epsilon)}{1-\epsilon}L_i^*(\Bhat_i,\Sigmahat) < L_i^*(\Bhat,\Sigmahat), \nonumber
        \end{align}
        is satisfied for all $\{\Bhat \subset [p]\setminus \{i\} : |\Bhat|=d, \ B_i \not\subseteq \Bhat \}$ with probability at least 
        $1 - 2 p{\binom{p-1}{d}} e^{-(n-d)\kappa^4/32}$. Therefore
        \begin{align*}
        \p\left(\exists i \in [p] : B_i \not\subseteq \Bhat_i  \right) < 2p {\binom{p-1}{d}} e^{-(n-d)\kappa^4/32} < \delta/2,
        \end{align*}
        where the last inequality follows from $n-d > \frac{32}{\kappa^4}\log \left(\frac{4p^{d+1}}{\delta}\right)$.
    \end{proof}   
    
    \begin{proof}[Proof of Proposition~\ref{lem:pp}]
        Similar to the proof of Proposition~\ref{prop:betahat_estimates}, using Lemma~\ref{lem:conditional_distribution},\ref{lem:variance_bound} we get that for all $i \in \V$ and all $A \subset \V \setminus i$ such that  $B_i \subset A$ and $|A| = d$,
    \begin{align}
        \prob \left( \left|\betahat_{ij} - \frac{\Theta_{ij}}{\Theta_{ii}} \right| \leq \epsilon \sqrt{\frac{\Theta_{jj}}{\Theta_{ii}}} \right) &\leq 2pd\binom{p-1}{d}e^{-(n-d+1)\epsilon^2/4},
    \end{align}
     Using $\epsilon = \kappa/4$ we get
        \begin{align}
            \p &\left( |\betahat_{ij} - \frac{\theta_{ij}}{\theta_{ii}}| \leq \frac{\kappa}{4} \sqrt{\frac{\theta_{jj}}{\theta_{ii}}} \quad \forall j \in \Bhat_i, i \in \V\right) \geq 1 - 2dp  e^{-(n-d+1)\kappa^2/64} \stackrel{(a)}{\geq} 1 - \frac{\delta}{2}, \label{eq:betahat_ldp_final}
        \end{align}
        where the implication $(a)$ is obtained by using $n-d > \frac{64}{\kappa^2} \log \left( \frac{8dp}{\delta} \right)$ in the premise of Proposition~\ref{lem:pp}.
     
     \noindent Using \eqref{eq:betahat_ldp_final} for both $i$ and $j$ we get with probability greater than least $1 - \frac{\delta}{2}$,
     \begin{align}
        |\betahat_{ij} \betahat_{ji}| &\geq \left(\frac{|\theta_{ij}|}{\theta_{ii}} -  \frac{\kappa}{4} \sqrt{\frac{\theta_{jj}}{\theta_{ii}}}\right)\left(\frac{|\theta_{ij}|}{\theta_{jj}} -  \frac{\kappa}{4} \sqrt{\frac{\theta_{ii}}{\theta_{jj}}}\right) = \left(\frac{|\theta_{ij}|}{\sqrt{\theta_{ii}\theta_{jj}}} -  \frac{\kappa}{4}\right)^2 . \label{eq:beta_lower}
     \end{align}
     From \eqref{eq:beta_lower}, we get that for $(i,j) \in \E$ the estimates satisfy $\sqrt{|\betahat_{ij}||\betahat_{ji}|} \geq 3 \kappa/4 > \kappa/2$. An identical argument can be used to show that
      $\sqrt{|\betahat_{ij}||\betahat_{ji}|} \leq \kappa/4 < \kappa/2$. This proves that the post-processing step recovers the exact support.
        
    \end{proof} 
  
\section{Conclusions} \label{sec:conclusions}

In this paper, we propose the polynomial-time algorithm \dice~that provably recovers the support of sparse Gaussian graphical models with an information-theoretic optimal number of samples. On the theoretical side, this result confirms that the incoherence properties and condition number of the precision matrix are not necessary for the reconstruction task, and that the previously derived information-theoretic bound \cite{wang2010information} is tight. From the algorithmic perspective, reconstruction with the least number of samples is critical when the available data is scarce. Hence, even though the computational time of \dice\ can be large, it might still represent a valuable tool in the applications where the cost of additional data collection is larger than the cost of computations and where we expect the condition number of the precision matrix to be large. We also propose a simplified algorithm called \slice\ with slightly higher sample complexity than \dice\ but with better computational complexity and possibility of implementation as a mixed integer quadratic program, making it attractive in practice. \slice\ also retains a critical advantage of \dice\ in the sense that its sample complexity is also independent of any spurious quantities such as the condition number of the precision matrix.

Since we have now established that learning GGMs with an information-theoretic optimal number of samples given in \eqref{eq:IT_lower} is achievable, the challenge for future work is to design new algorithms that improve the computational complexity of \dice\ and \slice\ while still keeping the sample complexity optimal. From a theoretical point of view, this  constitutes another fundamental open problem -- what is the minimal computational complexity of any algorithm that can achieve the information-theoretic optimal sample complexity?

As suggested by the analysis presented in this paper, any algorithm with optimal sample complexity is likely to be assumption-free. In future work, it would be interesting to see if the ideas behind state-of-the-art assumption-free and computationally efficient algorithms for the reconstruction of discrete graphical models such as \cite{NIPS2016_screening,lokhov2016optimal} could be extended to the case of GGMs.


\appendix
\section{Tests of \slice\ scalability on synthetic and real data} \label{sec:scalability_slice}

In this section, we present several tests on synthetic and real data. Our goal is merely to illustrate that the use of modern Mixed-Integer Quadratic Programming (MIQP) solvers such as Gurobi \cite{gurobi} allows one to run \slice\ in a reasonable time even on relatively large realistic problems.  

As a first test, we run \slice\ on synthetic random graph instances of different degrees ($d=3$ and $d=4$) and sizes ($p=10$, $p=100$ and $p=1000$). The link strengths $\kappa_{ij}$ have been randomly generated in the ranges $[0.2,0.4]$ for $d=3$ and $[0.2,0.3]$ for $d=4$ instances. The family of regular random graphs has been chosen to eliminate potential dependencies on the heterogeneity in the degree distributions. For implementation, we used one possible MIQP formulation presented in the Supplementary Material, and the JuMP framework \cite{dunning2017jump} in julia for running the Gurobi solver. The running times for \slice\ with $n=10^4$ samples for each problem instance are presented in the Table~\ref{tab:running_time}. Notice that the practical scaling of running times is significantly better than what one would expect from the worst-case complexity $O(p^{d+1})$ for the full graph reconstruction.

\begin{table}[thb]
\caption{{\bf Comparison of running times for \slice\ on various regular random graphs with $n=10^4$ samples}: Longest MIQP Gurobi solver time and longest total running time for reconstruction of the neighborhood of one node, and total time for learning the entire graph.}
\label{sample-table}
\begin{center}
\begin{small}
\begin{sc}
\begin{tabular}{lcccr}
\toprule
Graph & Max for 1 node & Max for 1 node & Full problem \\
$(p, d)$ & (Gurobi) & (total) & (total) \\
\midrule
$(10,3)$     & 0.01 sec& 2.7 sec& 7.2 sec\\
$(10,4)$   & 0.03 sec& 2.8 sec& 7.6 sec\\
$(100,3)$  & 0.03 sec& 2.7 sec& 19.8 sec\\
$(100,4)$  & 0.04 sec& 2.8 sec& 21.7 sec\\
$(1000,3)$ & 15.7 sec& 19.3 sec& 18 hours\\
$(1000,4)$ & 92.3 sec& 96 sec& 29.3 hours\\
\bottomrule
\end{tabular}
\end{sc}
\end{small}
\end{center}
\label{tab:running_time}
\vskip -0.16in
\end{table}

\begin{figure}[hbt]
\begin{center}
\raisebox{0.00\height}{\includegraphics[width=0.39\columnwidth]{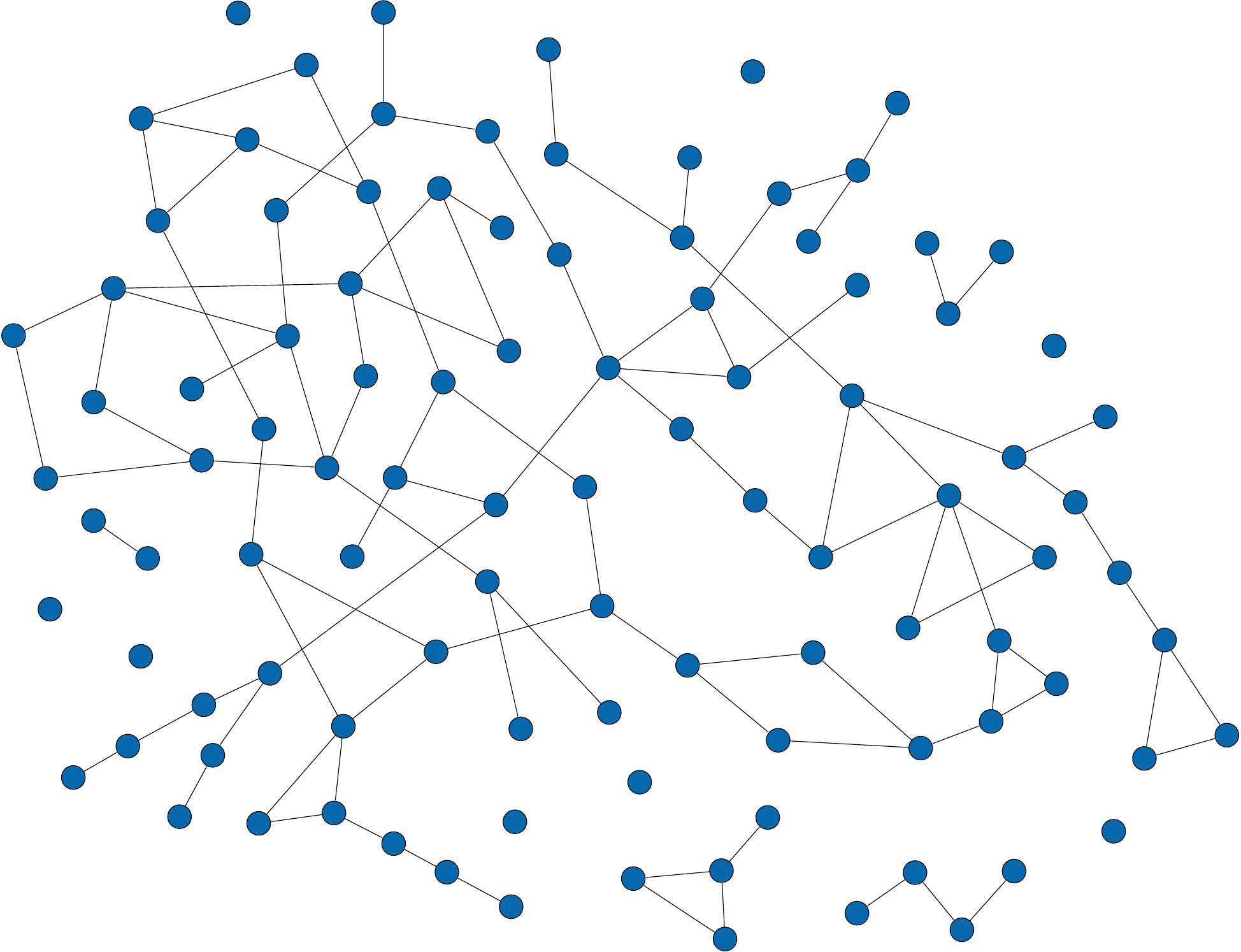}}
\caption{{\bf Graph learned with \slice\ from Riboflavin data set.} This real-world data set \cite{buhlmann2014high} contains $p=101$ variables and $n=71$ samples. In the reconstruction procedure, the maximum degree has been set to $d=6$.}
\label{fig:bio}
\end{center}
\vskip -0.2in
\end{figure}

For the illustration on real data, we use the biological data set related to the Riboflavin production with \emph{B. subtilis}. This data set contains the logarithm of the Riboflavin production rate alongside the logarithms of normalized expression levels of $100$ genes that are most responsive to the Riboflavin production. Hybridization under different fermentation conditions lead to the acquisition of $n = 71$ samples, see \cite{buhlmann2014high} for more details and raw data. The graph reconstructed with \slice\ and constraint $d=6$ is depicted in the Figure~\ref{fig:bio}. It took about $2.5$ days for the algorithm to learn this graph (with the proof of optimality of the obtained solution) in this high-dimensional regime. Notice that again the practical running time for \slice\ using MIQP technology is much lower than the one required to search over the $10^{14}$ candidate neighborhoods of size $d=6$. This example is a perfect illustration of a trade-off between sample and algorithmic complexity in real-world problems where the collection of samples might be very costly.

\section{Proof of technical lemmas} \label{sec:proof_of_lemmas}
We will need the following result from \cite{ouellette1981schur} in the proofs of the technical lemmas. 
\begin{lemma}[\cite{ouellette1981schur} Eq. 6.78] \label{lem:inv_wishart}
    Let $X \in \mathbbm{R}^{k \times k} \sim W(V,l)$ be a random matrix distributed according to the Wishart distribution with parameter $V \succ 0$ and order $l > k-1$. Let $Y = X^{-1}$ with $Y \sim W^{-1}(U, l)$ where $U = V^{-1}$. Let 
            \begin{align*}
                X = \left[ \begin{array}{cc}
                     X_{11} & X_{12}  \\
                     X_{21} & X_{22}
                \end{array}
                \right], \
                V = \left[ \begin{array}{cc}
                     V_{11} & V_{12}  \\
                     V_{21} & V_{22}
                \end{array}
                \right]
            \end{align*}
            be any compatible block matrix representation of $X$ and $V$. Consider block representations for $Y,U$ with the same dimensions $k_1, k_2$ that satisfy $k_1+k_2=k$. Then,
            \begin{itemize}
            \item[(a)] The Schur complements of $X_{11}$ and $Y_{11}$ are distributed as
            \begin{align*}
                X_{11} - X_{12}X_{22}^{-1}X_{21} &\sim  W(V_{11}-V_{12}V_{22}^{-1}V_{21}, l-k_2)  \\
                Y_{11} - Y_{12}Y_{22}^{-1}Y_{21} &= X_{11}^{-1} \sim W^{-1}(V_{11}^{-1}, l)  = W^{-1}(U_{11}-U_{12}U_{22}^{-1}U_{21}, l),
            \end{align*}
            \item[(b)] The random matrix $Y_{22}^{-1}Y_{21}$ conditioned on $X_{11}^{-1}$ is distributed as a matrix normal distribution
            \begin{align*}
                Y_{22}^{-1}Y_{21} \mid X_{11}^{-1} \sim \mathcal{N} \left( U_{22}^{-1}U_{21}, X_{11}^{-1} \otimes U_{22}^{-1} \right).
            \end{align*}
            \end{itemize}
    \end{lemma}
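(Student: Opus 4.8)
This is a classical fact about the Wishart family, so the plan is to reduce everything to the Gaussian representation of the Wishart law together with two purely linear-algebraic tools: the $2\times2$ block-inversion formula and the invariance of i.i.d.\ Gaussian vectors under orthogonal transformations. Write $X = \mathbf{z}\mathbf{z}^{T}$, where $\mathbf{z}\in\R^{k\times l}$ has i.i.d.\ columns drawn from $\mathcal{N}(0,V)$; this is the defining representation of $W(V,l)$, and it is valid because $l > k-1$. Partition the rows of $\mathbf{z}$ into a top block $\mathbf{z}^{(1)}\in\R^{k_{1}\times l}$ and a bottom block $\mathbf{z}^{(2)}\in\R^{k_{2}\times l}$, so that $X_{11}=\mathbf{z}^{(1)}(\mathbf{z}^{(1)})^{T}$, $X_{12}=\mathbf{z}^{(1)}(\mathbf{z}^{(2)})^{T}$ and $X_{22}=\mathbf{z}^{(2)}(\mathbf{z}^{(2)})^{T}$.

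For the first line of (a), I would condition on $\mathbf{z}^{(2)}$ and use the Gaussian regression decomposition: given $\mathbf{z}^{(2)}$, write $\mathbf{z}^{(1)} = V_{12}V_{22}^{-1}\mathbf{z}^{(2)} + \mathbf{w}$, where $\mathbf{w}$ is independent of $\mathbf{z}^{(2)}$ with i.i.d.\ columns $\sim\mathcal{N}(0,\,V_{11}-V_{12}V_{22}^{-1}V_{21})$. Substituting into the Schur complement and using that the $l\times l$ matrix $P=(\mathbf{z}^{(2)})^{T}(\mathbf{z}^{(2)}(\mathbf{z}^{(2)})^{T})^{-1}\mathbf{z}^{(2)}$ is the orthogonal projection onto the row space of $\mathbf{z}^{(2)}$ (hence $\mathbf{z}^{(2)}(I-P)=0$), one obtains the key simplification
\begin{align*}
X_{11}-X_{12}X_{22}^{-1}X_{21} = \mathbf{w}(I-P)\mathbf{w}^{T}.
\end{align*}
Conditionally on $\mathbf{z}^{(2)}$, $I-P$ is a fixed rank-$(l-k_{2})$ projection and the columns of $\mathbf{w}$ are i.i.d.\ Gaussian, so picking $Q$ with orthonormal columns spanning $\mathrm{range}(I-P)$ gives $\mathbf{w}(I-P)\mathbf{w}^{T}=(\mathbf{w}Q)(\mathbf{w}Q)^{T}$ with $\mathbf{w}Q$ still having i.i.d.\ $\mathcal{N}(0,\,V_{11}-V_{12}V_{22}^{-1}V_{21})$ columns; hence the Schur complement is $W(V_{11}-V_{12}V_{22}^{-1}V_{21},\,l-k_{2})$, and since this conditional law is free of $\mathbf{z}^{(2)}$ it is also the unconditional law (the same computation incidentally yields independence of the Schur complement from $(X_{12},X_{22})$, which is not needed here). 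For the second line of (a), the identity $Y_{11}-Y_{12}Y_{22}^{-1}Y_{21}=X_{11}^{-1}$ is purely algebraic (expand the $2\times2$ block inverse of $X$); then $X_{11}=\mathbf{z}^{(1)}(\mathbf{z}^{(1)})^{T}\sim W(V_{11},l)$ is just the Wishart law of a marginal, so $X_{11}^{-1}\sim W^{-1}(V_{11}^{-1},l)$, and $V_{11}^{-1}=U_{11}-U_{12}U_{22}^{-1}U_{21}$ is the block-inversion formula applied to $V$.

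For part (b), I would first record the algebraic identity $Y_{22}^{-1}Y_{21}=-X_{21}X_{11}^{-1}$, read off from the $(2,1)$ block of $YX=I$. Now condition on $\mathbf{z}^{(1)}$ (equivalently on $X_{11}$) and run the Gaussian regression the other way: $\mathbf{z}^{(2)}=V_{21}V_{11}^{-1}\mathbf{z}^{(1)}+\widetilde{\mathbf{w}}$ with $\widetilde{\mathbf{w}}$ independent of $\mathbf{z}^{(1)}$, i.i.d.\ columns $\sim\mathcal{N}(0,\,V_{22}-V_{21}V_{11}^{-1}V_{12})$. Then
\begin{align*}
-X_{21}X_{11}^{-1} = -V_{21}V_{11}^{-1} - \widetilde{\mathbf{w}}\,(\mathbf{z}^{(1)})^{T}\bigl(\mathbf{z}^{(1)}(\mathbf{z}^{(1)})^{T}\bigr)^{-1},
\end{align*}
and the second term, being a linear image of $\widetilde{\mathbf{w}}$, is matrix-normal with row covariance $V_{22}-V_{21}V_{11}^{-1}V_{12}$ and column covariance $(\mathbf{z}^{(1)}(\mathbf{z}^{(1)})^{T})^{-1}=X_{11}^{-1}$, i.e.\ in vectorized form its covariance is $X_{11}^{-1}\otimes(V_{22}-V_{21}V_{11}^{-1}V_{12})$. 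Finally, the block-inversion formula for $V$ gives $V_{22}-V_{21}V_{11}^{-1}V_{12}=U_{22}^{-1}$ and $-V_{21}V_{11}^{-1}=U_{22}^{-1}U_{21}$, and since $X_{11}^{-1}$ is a deterministic function of $X_{11}$, conditioning on $X_{11}^{-1}$ is the same as conditioning on $X_{11}$; this produces exactly the claimed conditional law. The only genuinely non-bookkeeping step is the rotational-invariance argument that identifies $\mathbf{w}(I-P)\mathbf{w}^{T}$ as a Wishart of reduced order: one must be careful to make $P$ deterministic by conditioning on $\mathbf{z}^{(2)}$, apply orthogonal invariance of the i.i.d.\ Gaussian columns, and then note that the resulting law does not depend on the conditioning variable. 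Everything else reduces to applying the $2\times2$ block-inverse formula to $X$ and to $V$.
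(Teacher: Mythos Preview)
The paper does not prove this lemma at all; it is quoted verbatim from Ouellette (1981) and used as a black box in the proofs of Lemmas~\ref{lem:perturbation}--\ref{lem:variance_bound}. Your proposal therefore cannot be compared to ``the paper's own proof,'' but it is a correct and self-contained derivation of this classical Wishart fact. The Gaussian-representation argument you outline---conditioning on one block of $\mathbf{z}$, using the regression decomposition, and exploiting orthogonal invariance to identify $\mathbf{w}(I-P)\mathbf{w}^{T}$ as a reduced-order Wishart---is exactly the standard route (essentially the Bartlett decomposition in disguise), and your treatment of part~(b) via $Y_{22}^{-1}Y_{21}=-X_{21}X_{11}^{-1}$ together with the block-inversion identities for $V$ is clean and accurate. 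The only point worth making explicit is the passage from ``conditioning on $\mathbf{z}^{(1)}$'' to ``conditioning on $X_{11}^{-1}$'': you invoke the correct principle (the conditional law depends on $\mathbf{z}^{(1)}$ only through $X_{11}$, hence the law given the coarser $\sigma$-field generated by $X_{11}$ coincides), but a reader might appreciate one sentence spelling that out.
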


\begin{proof}[Proof of Lemma~\ref{lem:perturbation}]
    Fix $i \in \V$ and $A \subset [p]\setminus \{i\}$ with $|A|  = d$. Then using properties of the Wishart distribution in Lemma~\ref{lem:inv_wishart} part $(a)$, we get that,
    \begin{align}
        \Sigmahat_{ii} - \Sigmahat_{iA} \Sigmahat_{AA}^{-1} \Sigmahat_{Ai} &\sim (\Sigma_{ii} - \Sigma_{i,A} \Sigma_{AA}^{-1} \Sigma_{Ai}) \chi^2_{n-d} \nonumber \\
        &= L_i^*(A,\Sigma) \chi^2_{n-d},
    \end{align}
    where $\chi^2_t$ denotes the standard Chi-squared distribution with $t$ degrees of freedom. Using the Chernoff bound, 
   \begin{align}
       \prob (\chi_{n-d}^2 >1 + \epsilon) &< e^{-(n-d) \left( \frac{\epsilon}{2} - \frac{1}{2}\log(1+\epsilon) \right)} < e^{-(n-d)\epsilon^2/8}, \label{eq:chernoff_upper} \\
       \prob (\chi_{n-d}^2 <1 - \epsilon) &< e^{-(n-d) \left( -\frac{1}{2}\log(1-\epsilon)  - \frac{\epsilon}{2} \right)} < e^{-(n-d)\epsilon^2/8}. \label{eq:chernoff_lower}
   \end{align}
   The proof is completed by using the union bound for all $A \subset [p] \setminus i$ with $|A|=d$ and over all $i \in \V$.
\end{proof}

\begin{proof}[Proof of Lemma~\ref{lem:conditional_distribution}]
    For any $i \in \V$ and $A \subseteq \V \setminus {i}$ with $B_i \subseteq A$ and $|A| = 2d$, let 
    \begin{align}
            \psihat_{(iA)(iA)} = \left(\Sigmahat_{(iA)(iA)}\right)^{-1},  \quad    \Psi_{(iA)(iA)} = \left(\Sigma_{(iA)(iA)}\right)^{-1}.
    \end{align}
    Using the block matrix decomposition for matrix inverse 
         \begin{align}
            \Psi_{(iA)(iA)}  = \Theta_{(iA)(iA)} - \Theta_{(iA) D}\Theta_{DD}^{-1}\Theta_{D (iA)}, \quad \mbox{where}\quad  D = \V \setminus \{i \cup A \}. \label{eq:Psi_schur}
        \end{align}
       Since $B_i \subseteq A$, we must have $\Theta_{iD} = 0$. Hence the matrix $\Psi_{(iA)(iA)}$ satisfies 
        \begin{align}
            \Psi_{ii} = \Theta_{ii}, \quad \Psi_{ij}  &= \Theta_{ij},   \quad \forall j \in A. \label{eq:psi_theta_eq}
        \end{align}
        
         From Lemma~\ref{lem:inv_wishart}, part $(b)$ we get that for all $j \in A$,
         \begin{align}
          \betahat_{ij} \mid \Sigmahat_{A A} &\sim \mathcal{N}\left(\frac{\Psi_{ij}}{\Psi_{ii}},  \Psi_{ii}^{-1}\left( \Sigmahat_{A A}^{-1}\right)_{jj}\right) \\
            &\stackrel{(a)}{=} \mathcal{N}\left(\frac{\Theta_{ij}}{\Theta_{ii}},  \Theta_{ii}^{-1}\left( \Sigmahat_{A A}^{-1}\right)_{jj}\right),
        \end{align}
     where $(a)$ follows from \eqref{eq:psi_theta_eq}.    
\end{proof}

\begin{proof}[Proof of Lemma~\ref{lem:variance_bound}]  
    From \eqref{eq:Psi_schur} we get that for all $j \in A$,
    \begin{align}
        \Psi_{jj} = \Theta_{jj} - \Theta_{jD}\Theta_{DD}^{-1}\Theta_{Dj} \leq \Theta_{jj}. \label{eq:psi_theta_ineq}
    \end{align}
     From Lemma~\ref{lem:inv_wishart}, the random matrix $\Sigmahat_{A A}$ is distributed according to the Wishart distribution  $\Sigmahat_{A A} \sim W\left(\Sigma_{A A},n\right)$. Hence for any $j \in A$,
     \begin{align}
         \left( \left[ \Sigmahat_{A  A}^{-1}\right]_{jj}\right)^{-1} &= \Sigmahat_{jj} - \Sigmahat_{j (A \setminus j)} \Sigmahat_{(A \setminus j)(A \setminus j)}^{-1} \Sigmahat_{(A \setminus j) j} \nonumber \\
         & \stackrel{(a)}{\sim} ( \Sigma_{jj} - \Sigma_{j (A \setminus j)} \Sigma_{(A \setminus j)(A \setminus j)}^{-1} \Sigma_{(A \setminus j) j}) \chi^2_{n-2d+1} \triangleq \alpha_j  \chi^2_{n-d+1}, 
    \end{align}
     where $(a)$ follows from Lemma~\ref{lem:inv_wishart} and we can bound the constant $\alpha_j$ from above for every $j \in A$ as
     \begin{align}
        \alpha_j^{-1} &= \left(\Sigma_{jj} - \Sigma_{j (A \setminus j)} \Sigma_{(A \setminus j)(A \setminus j)}^{-1} \Sigma_{(A \setminus j) j} \right)^{-1} \nonumber = \Psi_{jj} - \Psi_{ji}^2 \Psi_{ii}^{-1} \stackrel{(a)}{\leq} \Psi_{jj} \leq \Theta_{jj}, \label{eq:alpha_bound}
     \end{align}
     where $(a)$ follows from \eqref{eq:psi_theta_ineq}.
   Hence, 
     \begin{align}
         \prob  \left( \left[ \Sigmahat_{A A}^{-1}\right]_{jj} > (1-\epsilon)^{-1} \Theta_{jj} \right)  &= \prob \left( \chi_{n-2d+1}^2 < (1-\epsilon)\alpha_j^{-1}\Theta_{jj}^{-1}  \right) \\
         & \stackrel{(a)}{\leq}  \prob \left( \chi_{n-2d+1}^2 < (1-\epsilon) \right) \\
         & \stackrel{(b)}{\leq} e^{-(n-2d+1)\epsilon^2/8}, \label{eq:var_bound}
     \end{align}
     where $(a)$ follows because by \eqref{eq:alpha_bound} we have $\alpha_j^{-1}\Theta_{jj}^{-1} \leq 1$, and $(b)$ follows from \eqref{eq:chernoff_upper}.
\end{proof}
\begin{proof}[Proof of Lemma~\ref{lem:multiplicative_gap}]
From \eqref{eq:cond_var_interp} we have that 
\begin{align}
L_i^*(B_i,\Sigma) = \Var(X_i | X_{B_i}) \stackrel{(a)}{=} \Var(X_i | X_{[p] \setminus i}) \stackrel{(b)}{=} \theta_{ii}^{-1},
\end{align}
where $(a)$ follows from the separation property of graphical models, and $(b)$ follows from \eqref{eq:optimum}.
Similarly, 
\begin{align}
L_i^*(\Bhat,\Sigma) = \Var(X_i | X_{\Bhat}).
\end{align}
Using the law of total variance we get that 
\begin{align}
\Var&(X_i | X_{\Bhat}) = \Ex\left[ \Var(X_i | X_{B_i \cup \Bhat}) \vert X_{\Bhat} \right]\nonumber \\
&+ \Var\left(  \Ex\left[ X_i | X_{B_i \cup \Bhat}\right] \vert X_{\Bhat} \right)\nonumber \\
&= \frac{1}{\theta_{ii}} + \frac{1}{\theta_{ii}^{2}} \Var\left(\sum_{j \in B_i \cup \Bhat} \theta_{ij} X_{j} \mid X_{\Bhat} \right).
\end{align}
Let $u \in B_i \setminus \Bhat$. From above, we get
\begin{align}
&\Var(X_i|X_{\Bhat}) - \Var(X_i | X_{B_i}) \\
&= \theta_{ii}^{-2} \Var\left(\sum_{j \in B_i \cup \Bhat} \theta_{ij} X_{j} \mid X_{\Bhat} \right) \nonumber \\
& \stackrel{(a)}{\geq} \theta_{ii}^{-2} \Var \left(\sum_{j \in B_i \cup \Bhat} \theta_{ij} X_j \mid X_{[p] \setminus \{i,u\}} \right) \nonumber \\
&= \theta_{ii}^{-1}\left( \frac{\theta_{ii}\theta_{uu}}{\theta_{iu}^2}-1\right)^{-1} \stackrel{(b)}{\geq} \Var(X_i|X_{B_i}) \frac{\kappa^2}{1-\kappa^2}.
\end{align}
The inequality $(a)$ follows from the fact that conditioning reduces variance in Gaussian and observing that $\Bhat \subseteq [p]\setminus \{i,u\}$. The inequality $(b)$ follows from 
\eqref{eq:kappa_def}.
\end{proof}
     
\bibliographystyle{plain}
\bibliography{Gaussian_Learning}



\end{document}